\documentclass{article}

\usepackage{microtype}
\usepackage{graphicx}
\usepackage{xcolor}
\usepackage{booktabs} 

\usepackage{hyperref}



\usepackage[accepted]{icml2020}

\newcommand{\ignore}[1]{}

\usepackage{microtype}

\usepackage{booktabs}
\usepackage{multirow}
\usepackage{array}

\usepackage{url}

\usepackage{subcaption}

\usepackage{graphicx} 
\usepackage{tikz}
\usetikzlibrary{arrows.meta,calc,decorations.markings,math,arrows.meta}

\usepackage{amsmath, amssymb, amsthm, dsfont, amsfonts}
\usepackage{nicefrac}       
\usepackage{mathtools}

\newcommand\numberthis{\addtocounter{equation}{1}\tag{\theequation}}  

\theoremstyle{definition}

\newtheorem{theorem}{Theorem}[section]
\newtheorem{proposition}{Proposition}[section]

\newtheorem*{theorem*}{Theorem}
\newtheorem*{proposition*}{Proposition}

\newtheorem*{remark*}{Remark}

\usepackage{algorithm, algorithmic}




\newcommand{\X}{\mathbf{x}}
\newcommand{\Y}{\mathbf{y}}

\newcommand{\predict}{LIMIT}



\icmltitlerunning{Improving Generalization by Controlling Label-Noise Information in Neural Network Weights}

\begin{document}

\twocolumn[
\icmltitle{Improving Generalization by Controlling \protect\\Label-Noise Information in Neural Network Weights}



\icmlsetsymbol{equal}{*}

\begin{icmlauthorlist}
\icmlauthor{Hrayr Harutyunyan}{isi}
\icmlauthor{Kyle Reing}{isi}
\icmlauthor{Greg Ver Steeg}{isi}
\icmlauthor{Aram Galstyan}{isi}
\end{icmlauthorlist}

\icmlaffiliation{isi}{Information Sciences Institute, University of Southern California, Marina del Rey, CA 90292}

\icmlcorrespondingauthor{Hrayr Harutyunyan}{hrayrh@isi.edu}

\icmlkeywords{Machine Learning, Information Theory, ICML, Overfitting, Generalization}

\vskip 0.3in
]



\printAffiliationsAndNotice{}  

\begin{abstract}
In the presence of noisy or incorrect labels, neural networks have the undesirable tendency to memorize information about the noise.
Standard regularization techniques such as dropout, weight decay or data augmentation sometimes help, but do not prevent this behavior.
If one considers neural network weights as random variables that depend on the data and stochasticity of training, the amount of memorized information can be quantified with the Shannon mutual information between weights and the vector of all training labels given inputs, $I(w ; \Y \mid \X)$.
We show that for any training algorithm, low values of this term correspond to reduction in memorization of label-noise and better generalization bounds.
To obtain these low values, we propose training algorithms that employ an auxiliary network that predicts gradients in the final layers of a classifier without accessing labels.
We illustrate the effectiveness of our approach on versions of MNIST, CIFAR-10, and CIFAR-100 corrupted with various noise models, and on a large-scale dataset Clothing1M that has noisy labels.
\end{abstract}

\section{Introduction}\label{sec:intro}
Supervised learning with deep neural networks has shown great success in the last decade.
Despite having millions of parameters, modern neural networks generalize surprisingly well.
However, their training is particularly susceptible to noisy labels, as shown by \citet{zhang2016understanding} in their analysis of generalization error.
In the presence of noisy or incorrect labels, networks start to memorize the training labels, which degrades the generalization performance~\cite{chen2019understanding}.
At the extreme, standard architectures have the capacity to achieve 100\% classification accuracy on training data, even when labels are assigned at random~\cite{zhang2016understanding}.
Furthermore, standard explicit or implicit regularization techniques such as dropout, weight decay or data augmentation do not directly address nor completely prevent label memorization~\cite{zhang2016understanding, arpit2017closer}.

\begin{figure}[!t]
    \centering
    \begin{subfigure}{\columnwidth}
    \includegraphics[width=\textwidth]{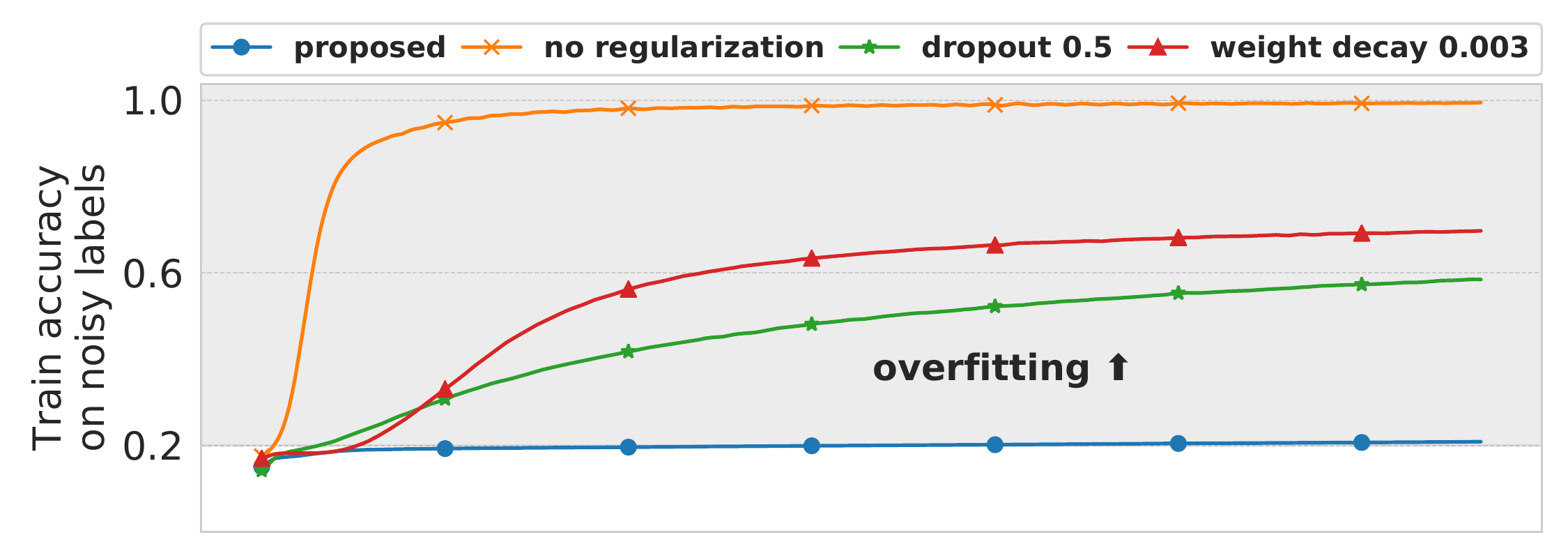}
    \end{subfigure}
    \begin{subfigure}{\columnwidth}
    \includegraphics[width=\textwidth]{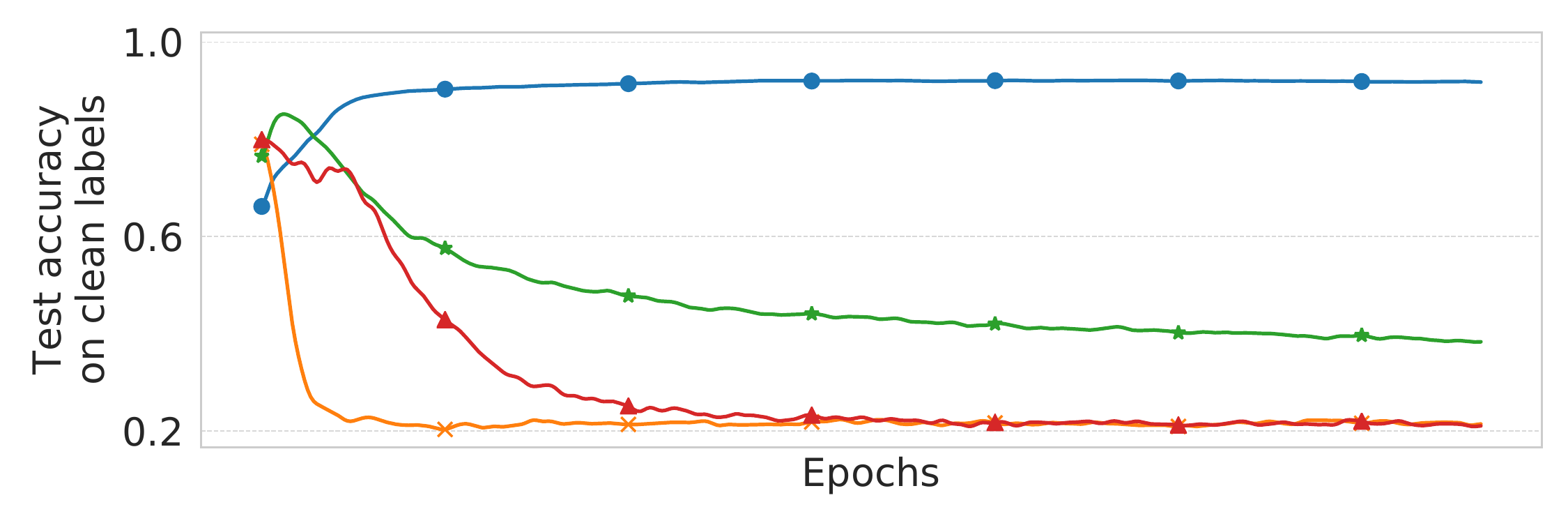}
    \end{subfigure}
    \caption{Neural networks tend to memorize labels when trained with noisy labels (80\% noise in this case), even when dropout or weight decay are applied.
    Our training approach limits label-noise information in neural network weights, avoiding memorization of labels and improving generalization. Please refer to Sec.~\ref{subsec:fano} for more details.}
    \label{fig:preventing_memorization}
\end{figure}
Poor generalization due to label memorization is a significant problem because many large, real-world datasets are imperfectly labeled.
Label noise may be introduced when building datasets from unreliable sources of information or using crowd-sourcing resources like Amazon Mechanical Turk.
A practical solution to the memorization problem is likely to be algorithmic as sanitizing labels in large datasets is costly and time consuming.
Existing approaches for addressing the problem of label-noise and generalization performance include deriving robust loss functions~\cite{natarajan2013learning, mae, gce, dmi}, loss correction techniques~\cite{Sukhbaatar2014TrainingCN,massive,goldberger2016training, patrini2017making}, re-weighting samples~\cite{jiang2017mentornet, ren2018learning}, detecting incorrect samples and relabeling them~\cite{Reed2014TrainingDN, tanaka2018joint, ma2018dimensionality}, and employing two networks that select training examples for each other~\cite{han2018co,Yu2019HowDD}.
We propose an information-theoretic approach that directly addresses the root of the problem. If a classifier is able to correctly predict a training label that is actually random, it must have somehow stored information about this label in the parameters of the model. 
To quantify this information, \citet{emergence} consider weights as a random variable, $w$, that depends on stochasticity in training data and parameter initialization. 
The entire training dataset is considered a random variable consisting of a vector of inputs, $\X$, and a vector of labels for each input, $\Y$.
The amount of label memorization is then given by the Shannon mutual information between weights and labels conditioned on inputs, $I(w ; \Y \mid \X)$. \citet{emergence} show that this term appears in a decomposition of the commonly used expected cross-entropy loss, along with three other individually meaningful terms. Surprisingly, cross-entropy rewards large values of $I(w ; \Y \mid \X)$, which may promote memorization if labels contain information beyond what can be inferred from $\X$.
Such a result highlights that in addition to the network's representational capabilities, the loss function -- or more generally, the learning algorithm -- plays an important role in memorization. To this end, we wish to study the utility of limiting $I(w ; \Y \mid \X)$, and how it can be used to modify training algorithms to reduce memorization.

Our main contributions towards this goal are as follows: 1) We show that low values of $I(w ; \Y \mid \X)$ correspond to reduction in memorization of label-noise, and lead to better generalization gap bounds. 2) We propose training methods that control memorization by regularizing label-noise information in weights.
When the training algorithm is a variant of stochastic gradient descent, one can achieve this by controlling label-noise information in gradients. A promising way of doing this is through an additional network that tries to predict the classifier gradients without using label information. We experiment with two training procedures that incorporate gradient prediction in different ways: one which uses the auxiliary network to penalize the classifier, and another which uses predicted gradients to train it. In both approaches, we employ a regularization that penalizes the L2 norm of predicted gradients to control their capacity.
The latter approach can be viewed as a search over training algorithms, as it implicitly looks for a loss function that balances training performance with label memorization.
3) Finally, we show that the auxiliary network can be used to detect incorrect or misleading labels.
To illustrate the effectiveness of the proposed approaches, we apply them on corrupted versions of MNIST, CIFAR-10, CIFAR-100 with various label noise models, and on the Clothing1M dataset, which already contains noisy labels. We show that methods based on gradient prediction yield drastic improvements over standard training algorithms (like cross-entropy loss), and outperform  competitive approaches designed for learning with noisy labels. 

\section{Label-Noise Information in Weights}\label{sec:label_noise}
We begin by formally introducing a measure of label-noise information in weights, and discuss its connections to memorization and generalization.
Throughout the paper we use several information-theoretic quantities such as entropy: $H(X) = -\mathbb{E}\big[\log p(x)\big]$, mutual information: $I(X ; Y) = H(X) + H(Y) - H(X,Y)$, Kullback–Leibler divergence: $\text{KL}(p(x) || q(x)) = \mathbb{E}_{x\sim p(x)}\left[\log(p(x)/q(x))\right]$ and their conditional variants~\cite{cover}.

Consider a setup in which a labeled dataset, $S = (\X,\Y)$, for data $\X = \left\{x^{(i)}\right\}_{i=1}^n$ and categorical labels $\Y = \left\{y^{(i)}\right\}_{i=1}^n$, is generated from a distribution $p_\theta(x,y)$.
A training algorithm for learning weights $w$ of a fixed probabilistic classifier $f(y \mid x, w)$ can be denoted as a conditional distribution $\mathcal{A}(w \mid S)$.
Given any training algorithm $\mathcal{A}$, its training performance can be measured using the expected cross-entropy:
\begin{equation*}
H_{p,f}(\Y \mid \X, w) = \mathbb{E}_{S}\mathbb{E}_{w|S}\left[\sum_{i=1}^n -\log f (y^{(i)} \mid x^{(i)}, w)\right].
\label{eq:expected-ce}
\end{equation*}
\citet{emergence} present a decomposition of this expected cross-entropy, which reduces to the following when the data generating process is fixed:
\begin{align*}
H_{p,f}(\Y \mid \X, w) = &H(\Y \mid \X) - \overbrace{I(w ; \Y \mid \X)}^\text{memorizing label-noise} \numberthis\label{eq:ce-decomp}\\
&+\mathbb{E}_{\X,w} \text{KL}\big[p(\Y \mid \X) || f(\Y \mid \X, w) \big].
\end{align*}

The problem of minimizing this expected cross-entropy is equivalent to selecting an appropriate training algorithm.
If the labels contain information beyond what can be inferred from inputs (meaning non-zero $H(\Y \mid \X))$, such an algorithm may do well by memorizing the labels through the second term of (\ref{eq:ce-decomp}).
Indeed, minimizing the empirical cross-entropy loss $\mathcal{A}^\text{ERM}(w \mid S) = \delta(w^*)$, where $w^* \in \text{arg}\min{}_{w}\sum_{i=1}^n -\log f (y^{(i)} \mid x^{(i)}, w)$, does exactly that~\cite{zhang2016understanding}.

\subsection{Decreasing $I(w ; \Y \mid \X)$ Reduces Memorization}\label{subsec:fano}
To demonstrate that $I(w ; \Y \mid \X)$ is directly linked to memorization, we prove that any algorithm with small $I(w ; \Y \mid \X)$ overfits less to label-noise in the training set.
\begin{theorem}
Consider a dataset $S=(\X, \Y)$ of $n$ i.i.d. samples,
$\X = \{x^{(i)}\}_{i=1}^n$ and $\Y = \{y^{(i)}\}_{i=1}^n$,
where the domain of labels is a finite set, $\mathcal{Y}$. 
Let $\mathcal{A}(w \mid S)$ be any training algorithm, producing weights for a possibly stochastic classifier $f(y \mid x, w)$.
Let $\widehat{y}^{(i)}$ denote the prediction of the classifier on the $i$-th example and let $e^{(i)} = \mathds{1}\{\widehat{y}^{(i)} \neq y^{(i)}\}$ be a random variable corresponding to predicting $y^{(i)}$ incorrectly.
Then, the following inequality holds:
\begin{equation*}
\mathbb{E}\left[\sum_{i=1}^n e^{(i)}\right] \ge \frac{H(\Y \mid \X) - I(w ; \Y \mid \X) - \sum_{i=1}^n H(e^{(i)})}{ \log \left(\lvert \mathcal{Y} \rvert - 1\right)}.
\end{equation*}
\label{thm:fano}
\end{theorem}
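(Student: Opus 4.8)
The plan is to reduce the statement to a chain-rule decomposition of entropy followed by $n$ applications of the textbook Fano inequality. First I would rewrite the information terms in the numerator as a single conditional entropy: by the definition of conditional mutual information, $H(\Y \mid \X) = I(w ; \Y \mid \X) + H(\Y \mid \X, w)$, so $H(\Y \mid \X) - I(w ; \Y \mid \X) = H(\Y \mid \X, w)$, the residual uncertainty about the labels given both the inputs and the trained weights. Thus the claim is equivalent to
\[
H(\Y \mid \X, w) \le \sum_{i=1}^n H(e^{(i)}) + \log\!\left(\lvert\mathcal{Y}\rvert - 1\right)\, \mathbb{E}\!\left[\sum_{i=1}^n e^{(i)}\right],
\]
and this is what I would establish.

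Next I would move from $(\X, w)$ to the predictions $\widehat{\Y} = (\widehat{y}^{(1)},\dots,\widehat{y}^{(n)})$. Since each $\widehat{y}^{(i)}$ is computed from $x^{(i)}$ and $w$ (together with whatever internal randomness a stochastic classifier uses, which is independent of $\Y$ given $(\X,w)$), the triple $\Y \to (\X, w) \to \widehat{\Y}$ is a Markov chain. Hence $H(\Y \mid \X, w) = H(\Y \mid \X, w, \widehat{\Y}) \le H(\Y \mid \widehat{\Y})$, where the equality uses $\widehat{\Y} \perp \Y \mid (\X, w)$ and the inequality is "conditioning reduces entropy." Then the chain rule for entropy gives $H(\Y \mid \widehat{\Y}) = \sum_{i=1}^n H\big(y^{(i)} \mid \widehat{\Y}, y^{(1)},\dots,y^{(i-1)}\big) \le \sum_{i=1}^n H\big(y^{(i)} \mid \widehat{y}^{(i)}\big)$, again because discarding conditioning variables only increases entropy.

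Finally, for each $i$ I would apply Fano's inequality to the pair $(y^{(i)}, \widehat{y}^{(i)})$, both taking values in $\mathcal{Y}$, with error event $\{e^{(i)}=1\}$: this yields $H\big(y^{(i)} \mid \widehat{y}^{(i)}\big) \le H(e^{(i)}) + \Pr(e^{(i)}=1)\log(\lvert\mathcal{Y}\rvert-1) = H(e^{(i)}) + \mathbb{E}[e^{(i)}]\log(\lvert\mathcal{Y}\rvert-1)$, using that $e^{(i)}$ is an indicator. Summing over $i$, chaining the three displays, and rearranging gives the boxed inequality above; dividing through by $\log(\lvert\mathcal{Y}\rvert-1)$ (assuming $\lvert\mathcal{Y}\rvert \ge 3$; the binary case is degenerate and handled by the coarser $\log\lvert\mathcal{Y}\rvert$ form of Fano) produces exactly the stated bound.

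The only real obstacle is the Markov-chain step $H(\Y \mid \X, w) \le H(\Y \mid \widehat{\Y})$: one must argue carefully that the predictions carry no information about $\Y$ beyond what $(\X, w)$ already carries, which is precisely where the modeling assumption that $\widehat{y}^{(i)}$ depends only on $x^{(i)}$, $w$, and label-independent randomness is used. Everything else is routine bookkeeping with the chain rule and the standard Fano inequality~\cite{cover}.
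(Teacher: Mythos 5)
Your proof is correct and takes essentially the same route as the paper's: both rest on Fano's inequality applied per example, the conditional independence of the predictions from $\Y$ given $(\X, w)$, and subadditivity of conditional entropy, combined with the identity $H(\Y \mid \X, w) = H(\Y \mid \X) - I(w ; \Y \mid \X)$. The only difference is bookkeeping — the paper invokes Fano conditioned on $(x^{(i)}, w)$ and aggregates via non-negativity of total correlation, whereas you first pass to $H(\Y \mid \widehat{\Y})$ via the chain rule and then apply the plain estimation form of Fano to each pair $\left(y^{(i)}, \widehat{y}^{(i)}\right)$; your caveat about $\lvert \mathcal{Y} \rvert = 2$ likewise matches the paper's separate treatment of that case.
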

\vspace{-1em}
This result establishes a lower bound on the expected number of prediction errors on the training set, which increases as $I(w ; \Y \mid \X)$ decreases.
For example, consider a corrupted version of the MNIST dataset where each label is changed with probability $0.8$ to a uniformly random incorrect label.
By the above bound, every algorithm for which $I(w ; \Y \mid \X) = 0$ will make at least $80\%$ prediction errors on the training set in expectation.
In contrast, if the weights retain $1$ bit of label-noise information per example, the classifier will make at least 40.5\% errors in expectation.
The proof of Thm.~\ref{thm:fano} uses Fano's inequality and is presented in the supplementary material (Sec.~\ref{subsec:fano_proof}).
Below we discuss the dependence of error probability on $I(w ; \Y \mid \X)$.

\textbf{Remark 1.\quad} If we let $k = |\mathcal{Y}|$ and $r = \frac{1}{n}\mathbb{E}\left[\sum_{i=1}^n e^{(i)}\right]$ denote the expected training error rate, then by Jensen's inequality we can simplify Thm.~\ref{thm:fano} as follows:
\begin{align*}
r &\ge \frac{H(y^{(1)} \mid x^{(1)}) - I(w ; \Y \mid \X) / n - \frac{1}{n}\sum_{i=1}^n H(e^{(i)})}{\log(k - 1)}\\
    &\ge \frac{H(y^{(1)} \mid x^{(1)}) - I(w ; \Y \mid \X) / n - H(r)}{\log(k - 1)}. \numberthis\label{eq:fano_r}
\end{align*}
Solving this inequality for $r$ is challenging. 
One can simplify the right hand side further by bounding $H(e^{(1)}) \le 1$ (assuming that entropies are measured in bits).
However, this will loosen the bound.
Alternatively, we can find the smallest $r_0$ for which (\ref{eq:fano_r}) holds and claim that $r \ge r_0$.

\textbf{Remark 2.\quad} If $|\mathcal{Y}| = 2$, then $\log(|\mathcal{Y}| - 1) = 0$, putting which in (\ref{eq:original-fano}) of supplementary leads to:
\begin{equation*}
H(r) \ge H(y^{(1)} \mid x^{(1)}) - I(w ; \Y \mid \X) / n.
\end{equation*}

\begin{figure}[t]
    \centering
    \includegraphics[width=0.4\textwidth]{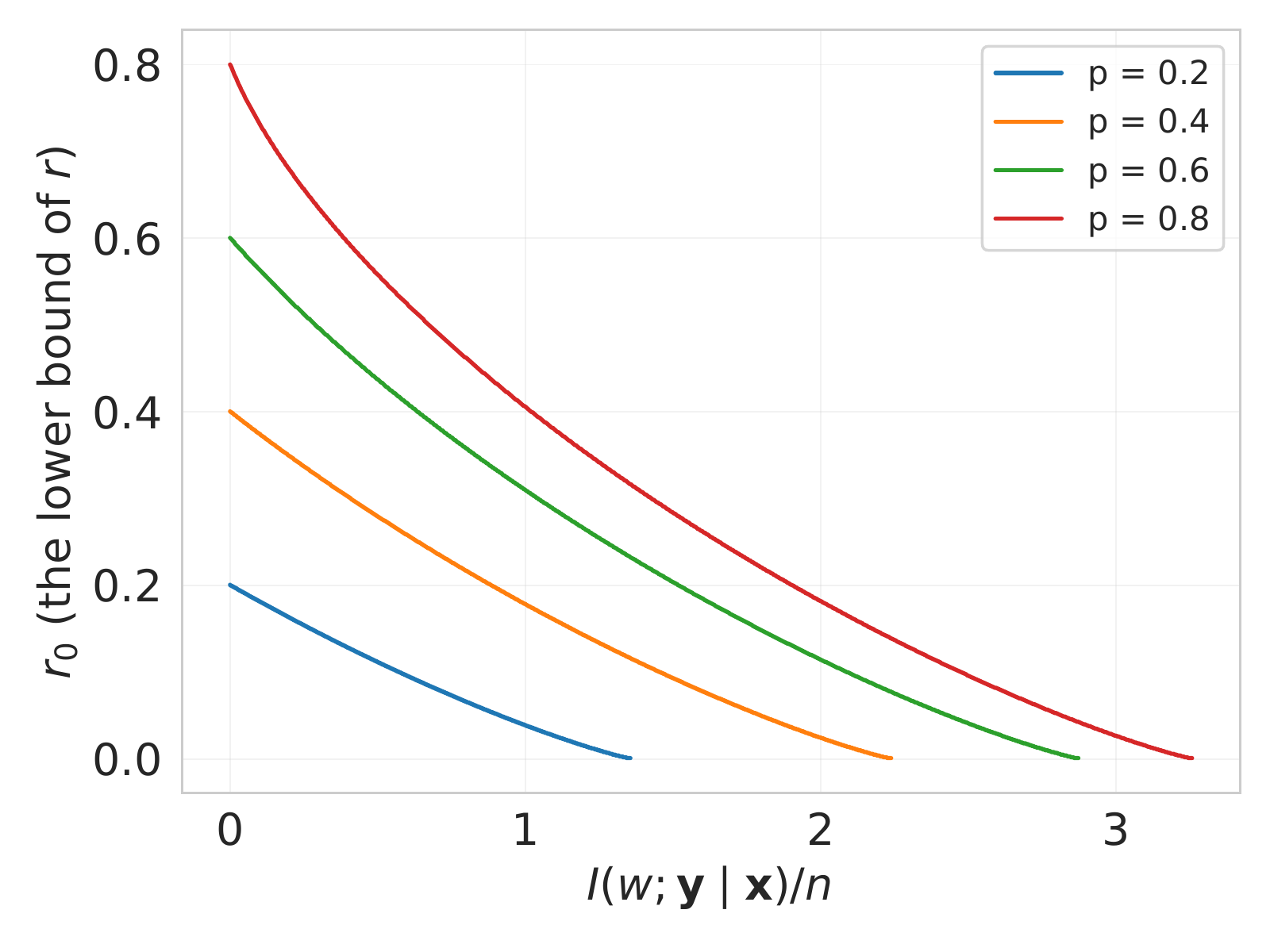}
    \caption{The lower bound $r_0$ on the rate of training errors $r$ Thm.~\ref{thm:fano} establishes for varying values of $I(w ; \Y \mid \X)$, in the case when label noise is uniform and probability of a label being incorrect is $p$.}
    \label{fig:fano}
\end{figure}

\textbf{Remark 3.\quad} When we have uniform label noise where a label is incorrect with probability $p$ ($0 \le p < \frac{k-1}{k}$) and $I(w ; \Y \mid \X) = 0$, the bound of (\ref{eq:fano_r}) is tight, i.e., implies that $r \ge p$. 
To see this, we note that $H(y^{(1)} \mid x^{(1)}) = H(p) + p \log (k-1)$, putting which in (\ref{eq:fano_r}) gives us:
\begin{equation}
    r \ge \frac{H(p) + p \log(k-1) - H(r)}{\log(k-1)} = p + \frac{H(p) - H(r)}{\log(k-1)}.
    \label{eq:fano_uniform}
\end{equation}
Therefore, when $r=p$, the inequality holds, implying that $r_0 \le p$. To show that $r_0 = p$, we need to show that for any $0 \le r < p$, the (\ref{eq:fano_uniform}) does not hold.
Let $r \in [0, p)$ and assume that (\ref{eq:fano_uniform}) holds. Then
\begin{align*}
    r &\ge p + \frac{H(p) - H(r)}{\log(k-1)}\\
      &\ge p + \frac{H(p) - \left(H(p) + (r-p) H'(p)\right)}{\log(k-1)}\\
      &\ge p + \frac{-(r-p) \log(k-1)}{\log(k-1)} = 2p - r.\numberthis\label{eq:fano-zero-mi-case}
\end{align*}
The second line above follows from concavity of $H(x)$; and the third line follows from the fact that $H'(p) > -\log(k-1)$ when $0\le p < (k-1)/k$.
Eq. (\ref{eq:fano-zero-mi-case}) directly contradicts with $r<p$. Therefore, Eq. (\ref{eq:fano_uniform}) cannot hold for any $r < p$.

When $I(w ; \Y \mid \X) > 0$, we can find the smallest $r_0$ by a numerical method.
Fig.~\ref{fig:fano} plots $r_0$ vs $I(w ; \Y \mid \X)$ when the label noise is uniform.
When the label-noise is not uniform, the bound of (\ref{eq:fano_r}) becomes loose as Fano's inequality becomes loose.
We leave the problem of deriving better lower bounds in such cases for a future work.

Thm.~\ref{thm:fano} provides theoretical guarantees that memorization of noisy labels is prevented when $I(w ; \Y \mid \X)$ is small, in contrast to standard regularization techniques -- such as dropout, weight decay, and data augmentation -- which only slow it down~\cite{zhang2016understanding, arpit2017closer}.
To demonstrate this empirically, we compare an algorithm that controls $I(w ; \Y \mid \X)$ (presented in Sec.~\ref{sec:method}) against these regularization techniques on the aforementioned corrupted MNIST setup. We see in Fig.~\ref{fig:preventing_memorization} that explicitly preventing memorization of label-noise information leads to optimal training performance (20\% training accuracy) and good generalization on a non-corrupted validation set. Other approaches quickly exceed 20\% training accuracy by incorporating label-noise information, and generalize poorly as a consequence.
The classifier here is a fully connected neural network with 4 hidden layers each having 512 ReLU units.
The rates of dropout and weight decay were selected according to the performance on a validation set.

\subsection{Decreasing $I(w ; \Y \mid \X)$ Improves Generalization}
The information that weights contain about a training dataset $S$ has previously been linked to generalization~\citep{xu2017information}. The following bound relates the expected difference between train and test performance to the mutual information $I(w ; S)$.

\begin{theorem}{\cite{xu2017information}}
Suppose $\ell(\hat{y},y)$ is a loss function, such that $\ell(f_w(x),y)$ is $\sigma$-sub-Gaussian random variable for each $w$. Let $S=(\X,\Y)$ be the training set, $\mathcal{A}(w\mid S)$ be the training algorithm, and $(\bar{x},\bar{y})$ be a test sample independent from $S$ and $w$. Then the following holds:
\begin{equation}
\small
\left\lvert \mathbb{E}\left[\ell(f_w(\bar{x}),\bar{y}) - \frac{1}{n}\sum_{i=1}^n \ell\left(f_w(x^{(i)}), y^{(i)}\right)\right] \right\rvert \le \sqrt{\frac{2\sigma^2}{n} I(w ; S)}
\label{eq:xu-raginsky}
\end{equation}
\end{theorem}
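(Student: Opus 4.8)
\noindent\emph{Proof plan.}\quad The plan is to follow the information-theoretic route of Xu and Raginsky, whose two ingredients are the Donsker--Varadhan variational formula for the KL divergence and a change-of-measure bound for sub-Gaussian variables, applied to the joint law of $(w,S)$ versus the product of its marginals. Write $L_\mu(w) = \mathbb{E}_{(\bar x,\bar y)}\big[\ell(f_w(\bar x),\bar y)\big]$ for the population risk and $L_S(w) = \frac1n\sum_{i=1}^n \ell(f_w(x^{(i)}),y^{(i)})$ for the empirical risk, so the quantity to control is $G := \mathbb{E}_{w,S}\big[L_\mu(w) - L_S(w)\big]$. First I would note that, because the test point $(\bar x,\bar y)$ is distributed like each training point and is independent of $w$, for every fixed $w$ we have $\mathbb{E}_S[L_S(w)] = L_\mu(w)$; hence $G = \mathbb{E}_{P_w\otimes P_S}[L_S(w)] - \mathbb{E}_{P_{w,S}}[L_S(w)]$, the gap between the expectation of the single functional $L_S(w)$ under the product of the marginals and under the true joint law. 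Also, for fixed $w$, $L_S(w)$ is an average of $n$ i.i.d.\ copies of the $\sigma$-sub-Gaussian variable $\ell(f_w(x),y)$, hence $\tfrac{\sigma}{\sqrt n}$-sub-Gaussian as a function of $S$, with mean $L_\mu(w)$.

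Next I would invoke Donsker--Varadhan with $P = P_{w,S}$, $Q = P_w\otimes P_S$, and test function $g(w,S) = \lambda\big(L_S(w) - L_\mu(w)\big)$ for a free $\lambda\in\mathbb{R}$:
\begin{equation*}
I(w;S) \;=\; \text{KL}(P\,\|\,Q) \;\ge\; \mathbb{E}_P[g] - \log \mathbb{E}_Q\big[e^{g}\big].
\end{equation*}
The delicate step, which I expect to be the main obstacle, is centering by the \emph{conditional} mean $L_\mu(w)$ rather than the full mean: since $L_\mu(w)$ depends on $w$ only, $\mathbb{E}_P[L_\mu(w)] = \mathbb{E}_w[L_\mu(w)] = \mathbb{E}_Q[L_S(w)]$, so the right-hand side collapses to exactly $-\lambda G - \log\mathbb{E}_Q\big[e^{\lambda(L_S(w) - L_\mu(w))}\big]$; this is what makes $I(w;S)$, and not some larger information quantity, appear. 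The log-moment-generating term is then bounded using only conditional sub-Gaussianity: conditioning on $w$, $\mathbb{E}_S\big[e^{\lambda(L_S(w) - L_\mu(w))}\big] \le e^{\lambda^2\sigma^2/(2n)}$, so $\log\mathbb{E}_Q[e^g] \le \lambda^2\sigma^2/(2n)$.

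Combining these gives $I(w;S) \ge -\lambda G - \lambda^2\sigma^2/(2n)$ for every $\lambda\in\mathbb{R}$; taking $\lambda<0$ yields an upper bound on $G$ and $\lambda>0$ an upper bound on $-G$, and optimizing ($|\lambda| = \sqrt{2n\,I(w;S)/\sigma^2}$) produces $|G| \le \sqrt{\tfrac{2\sigma^2}{n}\,I(w;S)}$, as claimed. The remaining steps are routine: the elementary fact that an average of $n$ i.i.d.\ $\sigma$-sub-Gaussians is $\tfrac{\sigma}{\sqrt n}$-sub-Gaussian (MGF multiplicativity), and the degenerate case $I(w;S)=0$ (which forces $G=0$). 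As a remark, an equivalent per-sample version bounds $\big|\mathbb{E}_{P_{w,z^{(i)}}}[\ell] - \mathbb{E}_{P_w\otimes P_{z^{(i)}}}[\ell]\big| \le \sqrt{2\sigma^2 I(w;z^{(i)})}$ for $z^{(i)}=(x^{(i)},y^{(i)})$ and then combines these via Cauchy--Schwarz together with $\sum_i I(w;z^{(i)}) \le I(w;S)$, valid since the $z^{(i)}$ are independent.
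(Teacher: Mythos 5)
Your argument is correct: the Donsker--Varadhan inequality applied to $P_{w,S}$ versus $P_w\otimes P_S$ with the test function $\lambda\bigl(L_S(w)-L_\mu(w)\bigr)$, the conditional sub-Gaussian MGF bound $\log\mathbb{E}_Q[e^{g}]\le\lambda^2\sigma^2/(2n)$, and optimization over $\lambda$ yield exactly $\lvert G\rvert\le\sqrt{2\sigma^2 I(w;S)/n}$, and the centering issue you flag is handled properly since $\mathbb{E}_{P_w\otimes P_S}[L_S(w)]=\mathbb{E}_w[L_\mu(w)]$. The paper itself gives no proof of this statement (it is imported from Xu and Raginsky), and your reconstruction is essentially their original argument, with your closing remark corresponding to the later per-sample refinement; both are sound.
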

For good test performance, learning algorithms need to have both a small generalization gap, and good training performance.
The latter may require retaining more information about the training set, meaning there is a natural conflict between increasing training performance and decreasing the generalization gap bound of (\ref{eq:xu-raginsky}).
Furthermore, information in weights can be decomposed as follows: $I(w ; S) = I(w ; \X) + I(w ; \Y \mid \X)$.
We claim that one needs to prioritize reducing $I(w ; \Y \mid \X)$ over $I(w ; \X)$ for the following reason.
When noise is present in the training labels, fitting this noise implies a non-zero value of $I(w ; \Y \mid \X)$, which grows linearly with the number of samples $n$.
In such cases, the generalization gap bound of (\ref{eq:xu-raginsky}) becomes a constant and does not improve as $n$ increases.
To get meaningful generalization bounds via (\ref{eq:xu-raginsky}) one needs to limit $I(w ; \Y \mid \X)$.
We hypothesize that for efficient learning algorithms, this condition might be also sufficient.

\section{Methods Limiting Label Information}\label{sec:method}
We now consider how to design training algorithms that control $I(w ; \Y \mid \X)$.
We assume $f(y \mid x, w) = \text{Multinoulli}(y; s(a))$, with $a$ as the output of a neural network $h_w(x)$, and $s\left(\cdot\right)$ as the softmax function.
We consider the case when $h_w(x)$ is trained with a variant of stochastic gradient descent for $T$ iterations.
The inputs and labels of a mini-batch at iteration $t$ are denoted by $x_t$ and $y_t$ respectively, and are selected using a deterministic procedure (such as cycling through the dataset, or using pseudo-randomness).
Let $w_0$ denote the weights after initialization, and $w_t$ the weights after iteration $t$.
Let $\mathcal{L}(w; x,y)$ be some classification loss function (e.g, cross-entropy loss) and $g^\mathcal{L}_t \triangleq \nabla_{w} \mathcal{L}(w_{t-1}; x_t, y_t)$ be the gradient at iteration $t$.
Let $g_t$ denote the gradients used to update the weights, possibly different from $g^\mathcal{L}_t$.
Let the update rule be $w_t = \Psi(w_0, g_{1:t})$, and $w_T = \Psi(w_0, g_{1:T})$ be the final weights (denoted with $w$ for convenience).

To limit $I(w ; \Y \mid \X)$, the following sections will discuss two approximations which relax the computational difficulty, while still provide meaningful bounds: 1) first, we show that the information in weights can be replaced by information in the gradients; 2) we introduce a variational bound on the information in gradients.
The bound employs an auxiliary network that predicts gradients of the original loss without label information.
We then explore two ways of incorporating predicted gradients: (a) using them in a regularization term for gradients of the original loss, and (b) using them to train the classifier. 

\subsection{Penalizing Information in Gradients}

Looking at (1) it is tempting to add $I(w ; \Y \mid \X)$ as a regularization to the $H_{p,f}(\Y \mid \X, w)$ objective and minimize over all training algorithms:
\begin{equation}
\min_{\mathcal{A}(w \mid D)} H_{p,f}(\Y \mid \X, w) + I(w ; \Y \mid \X).
\label{eq:opt-over-algos}
\end{equation}
This will become equivalent to minimizing $\mathbb{E}_{\X,w} \text{KL}\big[p(\Y \mid \X) || f(\Y \mid \X, w) \big]$.
Unfortunately, the optimization problem of (\ref{eq:opt-over-algos}) is hard to solve for two major reasons.
First, the optimization is over training algorithms (rather than over the weights of a classifier, as in the standard machine learning setup).
Second, the penalty $I(w ; \Y \mid \X)$ is hard to compute/approximate.

To simplify the problem of (\ref{eq:opt-over-algos}), we relate information in weights to information in gradients as follows:
\begin{align*}
I(w ; \Y \mid \X) &\le I(g_{1:T} ; \Y \mid \X)=\sum_{t=1}^T I(g_t ; \Y \mid \X, g_{<t}),\numberthis\label{eq:chain-rule}
\end{align*}
where $g_{1:T}$ and $g_{<t}$ are shorthands for sets  $\{g_1,\ldots,g_T\}$ and $\{g_1,\ldots,g_{t-1}\}$ respectively.
Hereafter, we focus on constraining $I(g_t ; \Y \mid \X, g_{<t})$ at each iteration.
Our task becomes choosing a loss function such that $I(g_t ; \Y \mid \X, g_{<t})$ is small and $f(y \mid x, w_t)$ is a good classifier.
One key observation is that if our task is to minimize label-noise information in gradients it may be helpful to consider gradients with respect to the last layer only and compute the remaining gradients using back-propagation.
As these steps of back-propagation do not use labels, by data processing inequality, subsequent gradients would have at most as much label information as the last layer gradient.

To simplify information-theoretic quantities, we add a small independent Gaussian noise to the gradients of the original loss: $\tilde{g}^\mathcal{L}_t \triangleq g^\mathcal{L}_t + \xi_t$, where $\xi_t \sim \mathcal{N}(0, \sigma_\xi^2 I)$ and $\sigma_\xi$ is small enough to have no significant effect on training (less than $10^{-9}$ is fine).
With this convention, we formulate the following regularized objective function:
\begin{equation}
\min_w \mathcal{L}(w; x_t, y_t) + \lambda I(\tilde{g}^\mathcal{L}_t ; \Y \mid \X, g_{<t}),
\label{eq:penalize-formulation}
\end{equation}
where $\lambda > 0$ is a regularization coefficient.
The term $I(\tilde{g}^\mathcal{L}_t ; \Y \mid \X, g_{<t})$ is a function of $\X$ and $g_{<t}$, or more explicitly, a function $\Phi(w_{t-1}; x_t)$ of $x_t$ and $w_{t-1}$. 
Computing this function would allow the optimization of (\ref{eq:penalize-formulation}) through gradient descent: $g_t = g^\mathcal{L}_t + \xi_t + \nabla_w \Phi(w_{t-1}; x_t)$.
Importantly, label-noise information is equal in both $g_t$ and $\tilde{g}^\mathcal{L}_t$, as the gradient from the regularization is constant given $\X$ and $g_{<t}$:
\begin{align*}
I(g_t ; \Y \mid \X, g_{<t}) &= I(g^\mathcal{L}_t + \xi_t + \nabla_w \Phi(w_{t-1}; x_t) ; \Y \mid \X, g_{<t})\\
&\hspace{-3em}= I(g^\mathcal{L}_t + \xi_t ; \Y \mid \X, g_{<t}) = I(\tilde{g}^\mathcal{L}_t ; \Y \mid \X, g_{<t}).
\end{align*}
Therefore, by minimizing $I(\tilde{g}^\mathcal{L}_t ; \Y \mid \X, g_{<t})$ in (\ref{eq:penalize-formulation}) we minimize $I(g_t ; \Y \mid \X, g_{<t})$, which is used to upper bound $I(w ; \Y \mid \X)$ in (\ref{eq:chain-rule}).
We rewrite this regularization in terms of entropy and discard the constant term, $H(\xi_t)$:
\begin{align*}
I(\tilde{g}^\mathcal{L}_t ; \Y \mid \X, g_{<t}) &= H(\tilde{g}^\mathcal{L}_t \mid \X, g_{<t}) - H(\tilde{g}^\mathcal{L}_t \mid \X, \Y, g_{<t})\\
&= H(\tilde{g}^\mathcal{L}_t \mid \X, g_{<t}) - H(\xi_t)\numberthis\label{eq:entropy-of-grad}.
\end{align*}

\subsection{Variational Bounds on Gradient Information}
The first term in (\ref{eq:entropy-of-grad}) is still challenging to compute, as we typically only have one sample from the unknown distribution $p(y_t \mid x_t)$.
Nevertheless, we can upper bound it with the cross-entropy $H_{p,q} = -\mathbb{E}_{\tilde{g}^\mathcal{L}_t}\left[ \log q_\phi(\tilde{g}^\mathcal{L}_t \mid \X, g_{<t})\right]$, where $q_\phi(\cdot \mid \X, g_{<t})$ is a variational approximation for $p(\tilde{g}^\mathcal{L}_t \mid \X, g_{<t})$:
\begin{align*}
H(\tilde{g}^\mathcal{L}_t \mid \X, g_{<t}) \le -\mathbb{E}\left[\log q_\phi(\tilde{g}^\mathcal{L}_t \mid \X, g_{<t})\right].
\end{align*}
This bound is correct when $\phi$ is a constant or a random variable that depends only on $\X$.
With this upper bound, (\ref{eq:penalize-formulation}) reduces to:
\begin{equation}
    \min_{w, \phi} \mathcal{L}(w ; x_t, y_t) - \lambda \mathbb{E}_{\tilde{g}^\mathcal{L}_t}\left[ \log q_\phi(\tilde{g}^\mathcal{L}_t \mid \X, g_{<t})\right].
    \label{eq:penalize-cross-entropy}
\end{equation}
This formalization introduces a soft constraint on the classifier by attempting to make its gradients predictable without labels $\Y$, effectively reducing $I(g_t ; \Y \mid \X, g_{<t})$.

Assuming $\widehat{y} = s(h_w(x))$ denotes the predicted class probabilities of the classifier and $\mathcal{L}$ is the cross-entropy loss, the gradient with respect to logits $a=h_w(x)$ is $\widehat{y} - y$ (assuming $y$ has a one-hot encoding).
Thus, $I(\tilde{g}^\mathcal{L}_t ; \Y \mid \X, g_{<t}) = I(\widehat{y_t} - y_t + \xi_t ; \Y \mid \X, g_{<t})$ = $I(y_t + \xi_t ; \Y \mid \X, g_{<t})$. Since this expression has no dependence on $w_{t-1}$, it would not serve as a meaningful regularizer. Instead, we descend an additional level to look at gradients of the final layer parameters.
When the final layer of $h_w(x)$ is fully connected with inputs $z$ and weights $U$ (i.e., $a = U z$), the gradients with respect to its parameters is equal to $(\widehat{y} - y)z^T$.
With this gradient, $I(\tilde{g}^\mathcal{L}_t ; \Y \mid \X, g_{<t}) = I((\widehat{y}_t - y_t)z_t^T + \xi_t ; \Y \mid \X, g_{<t}) = I(y_t z_t^T + \xi_t ; \Y \mid \X, g_{<t})$.
There is now dependence on $w_{t-1}$ through $z_t$, as this quantity can be reduced by setting $z_t$ to a small value.
We choose to parametrize $q_\phi(\cdot \mid \X, g_{<t})$ as a Gaussian distribution with mean $\mu_t = (s(a_t) - s(r_\phi(x_t))) z_t^T$ and fixed covariance $\sigma_q I$, where $r_\phi(\cdot)$ is another neural network.
Under this assumption, $H_{p,q}$ becomes proportional to:
\begin{align*}
\mathbb{E}\left[\left\lVert (\widehat{y}_t - y_t) z_t^T + \xi_t - \mu_t\right\rVert^2_2\right] &=  \mathbb{E}\left[\xi_t^2\right]\\
&\hspace{-8em}\quad + \mathbb{E}\left[\left\lVert z_t \right\rVert_2^2 \left\lVert y_t - s(r_\phi(x_t))\right\rVert^2_2\right].
\end{align*}
Ignoring constants and approximating the expectation above with one Monte Carlo sample computed using the label $y_t$, the objective of (\ref{eq:penalize-cross-entropy}) becomes:
\begin{equation}
\min_w \mathcal{L}(w; x_t, y_t) + \lambda \left[\left\lVert z_t \right\rVert_2^2 \left\lVert y_t - s(r_\phi(x_t))\right\rVert^2_2\right].
\label{eq:penalize-final}
\end{equation}
While this may work in principle, in practice the dependence on $w$ is only felt through the norm of $z$, making it too weak to have much effect on the overall objective.
We confirm this experimentally in Sec.~\ref{sec:experiments}.
To introduce more complex dependencies on $w$, one would need to model the gradients of deeper layers.

\subsection{Predicting Gradients without Label Information}\label{subsec:predict}
An alternative approach is to use gradients predicted by $q_\phi(\cdot \mid \X, g_{<t})$ to update classifier weights, i.e., sample $g_t \sim q_\phi(\cdot \mid \X, g_{<t})$.
This is a much stricter condition, as it implies $I(g_t ; \Y \mid \X, g_{<t})=0$ (again assuming $\phi$ is a constant or a random variable that depends only on $\X$).
Note that minimizing $H_{p,q}$ makes the predicted gradient $g_t$ a good substitute for the cross-entropy gradients $\tilde{g}^\mathcal{L}_t$.
Therefore, we write down the following objective function:
\begin{equation}
    \min_{w, \phi} \tilde{\mathcal{L}}(w_{t-1}; \phi, x_t) - \lambda \mathbb{E}_{\tilde{g}^\mathcal{L}_t}\left[ \log q_\phi(\tilde{g}^\mathcal{L}_t \mid \X, g_{<t})\right],
    \label{eq:predict-grad}
\end{equation}
where $\tilde{\mathcal{L}}(w; \phi, x)$ is a probabilistic function defined implicitly such that $\nabla_w \tilde{\mathcal{L}}(w; \phi, x) \sim q_\phi(\cdot \mid x, w)$.
We found that this approach performs significantly better than the penalizing approach of (\ref{eq:penalize-cross-entropy}).
To update $w$ only with predicted gradients, we disable the dependence of the second term of (\ref{eq:predict-grad}) on $w$ in the implementation.
Additionally, one can set $\lambda=1$ above as the first term depends only on $w$, while the second term depends only on $\phi$.

We choose to predict the gradients with respect to logits only and compute the remaining gradients using backpropagation.
We consider two distinct parameterizations for $q_\phi$ -- \textbf{Gaussian:} $q_\phi(\cdot \mid x, w) = \mathcal{N}\left(\mu, \sigma_q^2 I\right)$, and \textbf{Laplace:} $q_\phi(\cdot \mid x, w) = \prod_j \text{Lap}\left(\mu_j, \sigma_q /\sqrt{2})\right)$,
with $\mu = s(a) - s(r_\phi(x))$ and $r_\phi(\cdot)$ being an auxiliary neural network as before.
Under these Gaussian and Laplace parameterizations, $H_{p,q}$ becomes proportional to  $\mathbb{E}||\mu_t - \tilde{g}^\mathcal{L}_t||_2^2$ and $\mathbb{E}||\mu_t - \tilde{g}^\mathcal{L}_t||_1$ respectively.
In the Gaussian case $\phi$ is updated with a mean square error loss (MSE) function, while in the Laplace case it is updated with a mean absolute error loss (MAE).
The former is expected to be faster at learning, but less robust to noise~\cite{mae}.

\subsection{Reducing Overfitting in Gradient Prediction}
In both approaches of (\ref{eq:penalize-cross-entropy}) and (\ref{eq:predict-grad}), the classifier can still overfit if $q_\phi(\cdot \mid x, w)$ overfits.
There are multiple ways to prevent this.
One can choose $q_\phi$ to be parametrized with a small network, or pre-train and freeze some of its layers in an unsupervised fashion.
In this work, we choose to control the L2 norm of the mean of predicted gradients, $\lVert \mu \rVert^2_2$, while keeping the variance $\sigma_q^2$ fixed.
This can be viewed as limiting the capacity of gradients $g_t$.
\begin{proposition}
If $g_t = \mu_t + \epsilon_t$, where $\epsilon_t \sim \mathcal{N}(0, \sigma_q^2 I_d)$ is independent noise, and $\mathbb{E}\left[ \mu^T_t \mu_t \right] \le L^2$, then the following inequality holds:
\begin{align*}
I(g_t ; \Y \mid \X, g_{<t}) \le \frac{d}{2}\log\left(1 + \frac{L^2}{d\sigma_q^2}\right).
\end{align*}
\label{prop:gradient_capacity}
\end{proposition}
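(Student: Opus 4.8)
The plan is to reduce $I(g_t ; \Y \mid \X, g_{<t})$ to the mutual information across the Gaussian channel $\mu_t \mapsto g_t = \mu_t + \epsilon_t$ subject to an expected power constraint, and then invoke the textbook Gaussian-channel capacity bound together with Jensen's inequality to handle the conditioning on $(\X, g_{<t})$.

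\textbf{Step 1: reduce to $I(g_t ; \mu_t \mid \X, g_{<t})$.} Since $\epsilon_t$ is independent of everything else, conditionally on $(\mu_t, \X, g_{<t})$ the vector $g_t = \mu_t + \epsilon_t$ carries no information about $\Y$; i.e.\ $\Y \to (\mu_t, \X, g_{<t}) \to g_t$ is a Markov chain and $I(g_t ; \Y \mid \mu_t, \X, g_{<t}) = 0$. Combining this with the chain rule for mutual information,
\begin{align*}
I(g_t ; \Y \mid \X, g_{<t}) &\le I(g_t ; \Y, \mu_t \mid \X, g_{<t}) \\
&= I(g_t ; \mu_t \mid \X, g_{<t}) + I(g_t ; \Y \mid \mu_t, \X, g_{<t}) = I(g_t ; \mu_t \mid \X, g_{<t}),
\end{align*}
so it suffices to bound the last quantity.

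\textbf{Step 2: pointwise capacity bound.} Fix a realization of $(\X, g_{<t})$ and set $P \triangleq \mathbb{E}[\lVert \mu_t \rVert_2^2 \mid \X, g_{<t}]$. Writing $h(\cdot)$ for differential entropy, $I(g_t ; \mu_t \mid \X, g_{<t}) = h(g_t \mid \X, g_{<t}) - h(g_t \mid \mu_t, \X, g_{<t}) = h(g_t \mid \X, g_{<t}) - \tfrac{d}{2}\log(2\pi e\,\sigma_q^2)$, since given $\mu_t$ the variable $g_t$ is only a translate of $\epsilon_t$. Now bound $h(g_t \mid \X, g_{<t})$ by the differential entropy of a Gaussian with the same covariance $\cov_{g_t}$, use $\det \cov_{g_t} \le (\mathrm{tr}\,\cov_{g_t}/d)^d$ (AM--GM on the eigenvalues), use independence of $\epsilon_t$, and use $\mathrm{tr}\,\cov_{\mu_t} \le \mathbb{E}[\lVert \mu_t \rVert_2^2 \mid \X, g_{<t}] = P$ (variance is at most the second moment) to get $\mathrm{tr}\,\cov_{g_t} = \mathrm{tr}\,\cov_{\mu_t} + d\sigma_q^2 \le P + d\sigma_q^2$. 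This yields
\begin{align*}
I(g_t ; \mu_t \mid \X, g_{<t}) &\le \frac{d}{2}\log\!\left(1 + \frac{P}{d\sigma_q^2}\right).
\end{align*}

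\textbf{Step 3: average via Jensen.} The map $P \mapsto \tfrac{d}{2}\log(1 + P/(d\sigma_q^2))$ is concave, and by the tower rule $\mathbb{E}\big[\mathbb{E}[\lVert \mu_t \rVert_2^2 \mid \X, g_{<t}]\big] = \mathbb{E}[\mu_t^T \mu_t] \le L^2$. Taking the expectation of the Step-2 bound over $(\X, g_{<t})$ and applying Jensen's inequality gives $I(g_t ; \mu_t \mid \X, g_{<t}) \le \tfrac{d}{2}\log(1 + L^2/(d\sigma_q^2))$, which together with Step 1 proves the claim. There is no real obstacle here beyond conditioning bookkeeping: one must allow $\mu_t$ to remain random given $(\X, g_{<t})$ (it still depends on the initialization $w_0$ and on any randomness in $\phi$), ensure $\epsilon_t$ is independent of the whole tuple $(\mu_t, \X, g_{<t}, \Y)$ so that both the Markov chain in Step 1 and the identity $h(g_t \mid \mu_t, \cdot) = h(\epsilon_t)$ are valid, and note that the hypothesis constrains the power only in expectation — which is precisely why Step 3 uses Jensen's inequality rather than a pointwise bound.
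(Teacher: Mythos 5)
Your proof is correct and rests on the same core argument as the paper's: write the mutual information as an entropy difference, subtract the noise entropy $\tfrac{d}{2}\log(2\pi e\sigma_q^2)$, and bound the entropy of $g_t$ by the maximum-entropy Gaussian under a second-moment constraint, yielding the identical capacity-type bound. The only differences are bookkeeping refinements: you retain the conditioning on $(\X,g_{<t})$ and average via Jensen (and add an explicit data-processing step reducing $\Y$ to $\mu_t$), whereas the paper simply drops the conditioning, uses the unconditional bound $\mathbb{E}[g_t^Tg_t]\le d\sigma_q^2+L^2$, and writes $H(g_t\mid\X,\Y,g_{<t})=H(\epsilon_t)$ directly.
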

\vskip -1em
The proof is provided in the supplementary section~\ref{subsec:grad_info_proof}. The same bound holds when $\epsilon_t$ is sampled from a product of $d$ univariate zero-mean Laplace distributions with variance $\sigma_q^2$, since the proof relies only on $\epsilon_t$ being zero-mean and having variance $\sigma^2_q$.
The final objective of our main method becomes:
\begin{equation*}
    \min_{w, \phi} \tilde{\mathcal{L}}(w; \phi, x_t) - \lambda \mathbb{E}_{\tilde{g}^\mathcal{L}_t}\left[ \log q_\phi(\tilde{g}^\mathcal{L}_t \mid \X, g_{<t})\right] + \beta \lVert \mu_t \rVert_2^2.
\end{equation*}
As before, to update $w$ only with predicted gradients, we disable the dependence of the second and third terms above on $w$ in the implementation.
We name this final approach \textbf{LIMIT} -- \underline{\textbf{l}}imiting label \underline{\textbf{i}}nformation \underline{\textbf{m}}emorization \underline{\textbf{i}}n \underline{\textbf{t}}raining.
We denote the variants with Gaussian and Laplace distributions as LIMIT$_\mathcal{G}$ and LIMIT$_\mathcal{L}$ respectively.
The pseudocode of LIMIT is presented in the supplementary material (Alg.~\ref{alg:training-loop}).
Note that in contrast to the previous approach of (\ref{eq:penalize-formulation}), this follows the spirit of (\ref{eq:opt-over-algos}), in the sense that the optimization over $\phi$ can be seen as optimizing over training algorithms; namely, learning a loss function $\mathcal{L}'$ implicitly through gradients.
With this interpretation, the gradient norm penalty can be viewed as a way to smooth the learned loss, which is a good inductive bias and facilitates learning.

\section{Experiments}\label{sec:experiments}
We set up experiments with noisy datasets to see how well the proposed methods perform for different types and amounts of label noise.
The simplest baselines in our comparison are  standard cross-entropy (CE) and mean absolute error (MAE) loss functions.
The next baseline is the forward correction approach (FW) proposed by~\cite{patrini2017making}, where the label-noise transition matrix is estimated and used to correct the loss function.
Finally, we include the recently proposed determinant mutual information (DMI) loss, which is the log-determinant of the confusion matrix between predicted and given labels~\cite{dmi}.
Both FW and DMI baselines require initialization with the best result of the CE baseline.
To avoid small experimental differences, we implement all baselines, closely following the original implementations of FW and DMI.
We train all baselines except DMI using the ADAM optimizer~\cite{adam} with learning rate $\alpha = 10^{-3}$ and $\beta_1 = 0.9$.
As DMI is very sensitive to the learning rate, we tune it by choosing the best from the following grid of values $\{10^{-3},10^{-4},10^{-5},10^{-6}\}$.
For all baselines, model selection is done by choosing the model with highest accuracy on a validation set that follows the noise model of the corresponding train set.
All scores are reported on a clean test set.
Additional experimental details, including the hyperparameter grids, are presented in supplementary section~\ref{sec:experimental-details}.
The implementation of the proposed method and the code for replicating the experiments is available at \url{https://github.com/hrayrhar/limit-label-memorization}.

\subsection{MNIST with Uniform Label Corruption}
To compare the variants of our approach discussed earlier and see which ones work well, we do experiments on the MNIST dataset with corrupted labels.
In this experiment, we use a simple uniform label-noise model, where each label is set to an incorrect value uniformly at random with probability $p$.
In our experiments we try 4 values of $p$ -- 0\%, 50\%, 80\%, 89\%.
We split the 60K images of MNIST into training and validation sets, containing 48K and 12K samples respectively.
For each noise amount we try 3 different training set sizes -- $10^3$, $10^4$, and $4.8\cdot 10^4$.
All classifiers and auxiliary networks are 4-layer CNNs, with a shared architecture presented in the supplementary (Sec.~\ref{sec:experimental-details}).
For this experiment we include two additional baselines where additive noise (Gaussian or Laplace) is added to the gradients with respect to logits.
We denote these baselines with names ``CE + GN'' and ``CE + LN''.
The comparison with these two baselines demonstrates that the proposed method does more than simply reduce information in gradients via noise.
We also consider a variant of LIMIT where instead of sampling $g_t$ from $q$ we use the predicted mean $\mu_t$.

\begin{table}[!t]
\begin{center}
\small
\begin{tabular}{lcccccccccc}
\toprule
\multirow{2}{*}{Method} & \multicolumn{2}{c}{$p=0.0$} & \multicolumn{2}{c}{$p=0.5$} & \multicolumn{2}{c}{$p=0.8$}\\
\cmidrule(lr){2-3}
\cmidrule(lr){4-5}
\cmidrule(lr){6-7}
& $10^3$ & All & $10^3$ & All & $10^3$ & All\\
\midrule
CE                             & 94.3 & \textbf{99.2} & 71.8 & 97.2 & 27.0 & 87.2\\
CE + GN         & 89.5 & 97.1 & 70.5 & 97.4 & 25.9 & 85.3\\
CE + LN          & 90.0 & 96.7 & 66.8 & 97.6 & 30.2 & 74.5\\
MAE                            & 94.6 & \textbf{99.1} & 75.6 & 98.1 & 25.1 & 93.2\\
FW                             & 93.6 & \textbf{99.2} & 64.3 & 97.3 & 19.0 & 89.1\\
DMI                            & 94.5 & \textbf{99.2} & 79.8 & 98.3 & 30.3 & 88.8\\
Soft reg. (\ref{eq:penalize-final})                     & \textbf{95.7} & \textbf{99.2} & 76.4 & 98.2 & 28.8 & 89.3\\
\predict$_\mathcal{G}$ + S        & \textbf{95.6} & \textbf{99.3} & 82.8 & 98.7 & \textbf{35.9} & 93.4\\
\predict$_\mathcal{L}$ + S        & 94.8 & \textbf{99.3} & \textbf{88.7} & \textbf{98.9} & \textbf{35.6} & \textbf{97.6}\\
\predict$_\mathcal{G}$ - S       & \textbf{95.7} & \textbf{99.3} & 83.3 & 98.6 & \textbf{37.1} & 94.7\\
\predict$_\mathcal{L}$ - S       & 95.0 & \textbf{99.3} & \textbf{88.2} & \textbf{99.0} & \textbf{35.9} & \textbf{97.7}\\
\bottomrule
\end{tabular}
\end{center}
\caption{Test accuracy comparison on multiple versions of MNIST corrupted with uniform label noise. Error bars are presented in the supplementary material (Sec.~\ref{sec:more-results})}
\label{tab:mnist}
\end{table}
Table~\ref{tab:mnist} shows the test performances of different approaches averaged over 5 training/validation splits.
Standard deviations and additional combinations of $p$ and $n$ are presented in the supplementary (See tables~\ref{tab:mnist-with-error-bars-1} and \ref{tab:mnist-with-error-bars-2}).
Additionally, Fig.~\ref{fig:mnist-error-noise80-curves} shows the training and testing performances of the best methods during the training when $p=0.8$ and all training samples are used.
Overall, variants of LIMIT produce the best results and improve significantly over standard approaches.
The variants with a Laplace distribution perform better than those with a Gaussian distribution.
This is likely due to the robustness of MAE.
Interestingly, LIMIT works well and trains faster when the sampling of $g_t$ in $q$ is disabled (rows with ``-S'').
Thus, hereafter we consider this as our primary approach.
As expected, the soft regularization approach of (\ref{eq:penalize-cross-entropy}) and cross-entropy variants with noisy gradients perform significantly worse than LIMIT.
We exclude these baselines in our future experiments.
Additionally, we tested the importance of penalizing norm of predicted gradients by comparing training and testing performances of LIMIT with varying regularization strength $\beta$ in the supplementary (Fig.~\ref{fig:mnist-error-noise80-beta}).
We found that this penalty is essential for preventing memorization.

\begin{figure}[t]
    \centering
    \begin{subfigure}{0.99\columnwidth}
    \includegraphics[width=\textwidth]{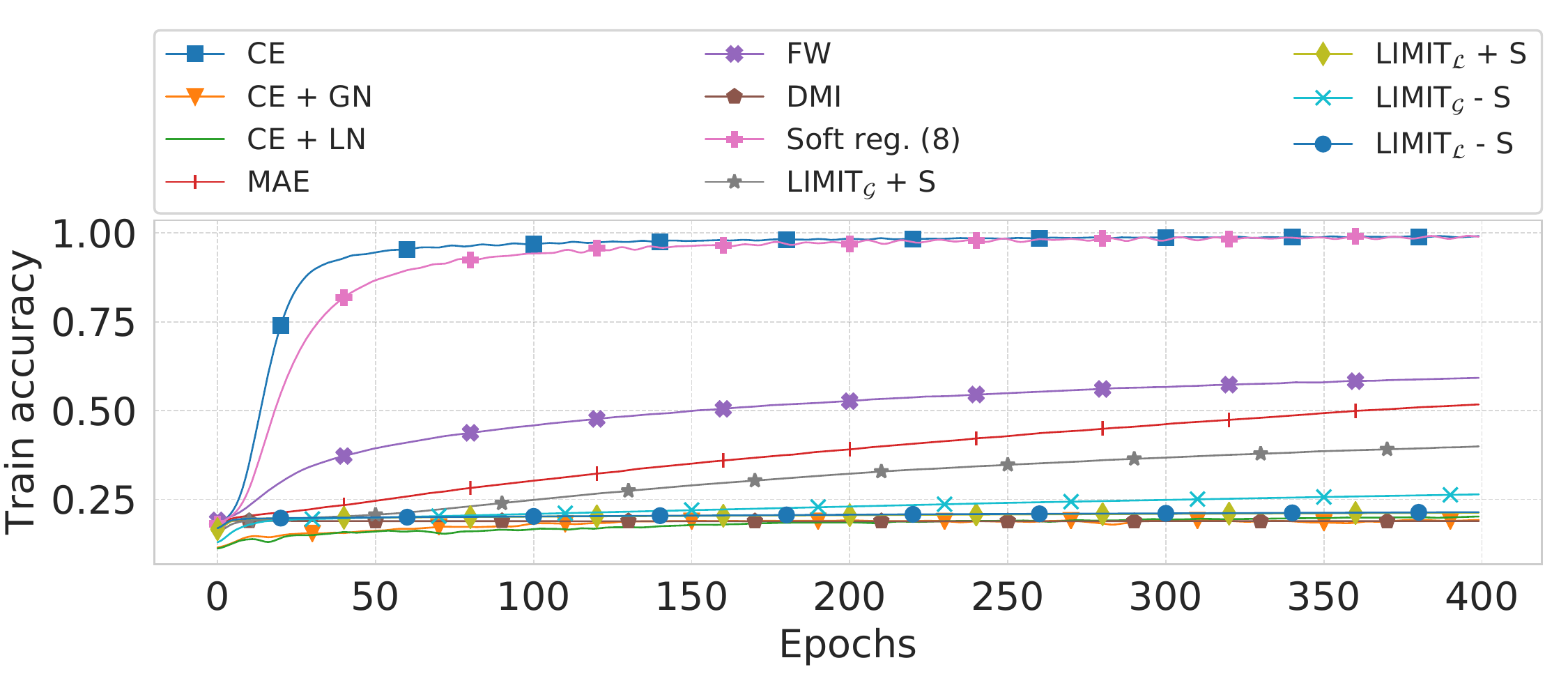}
    \end{subfigure}%
    
    \begin{subfigure}{0.99\columnwidth}
    \includegraphics[width=\textwidth]{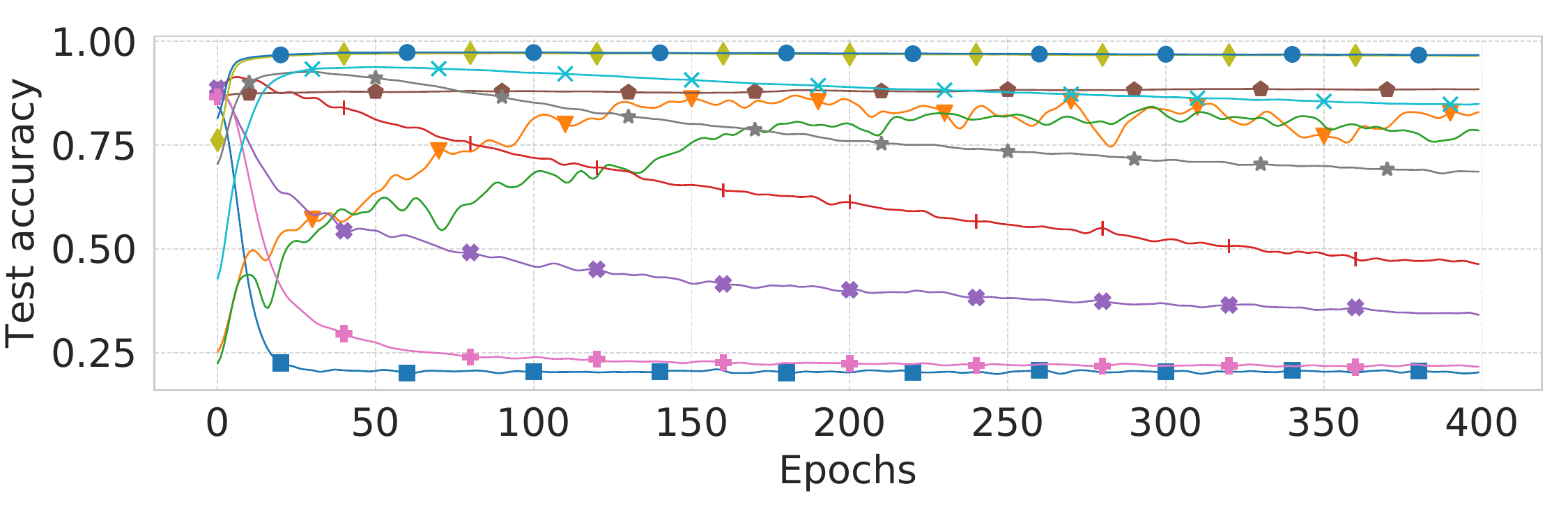}
    \end{subfigure}
    \caption{Smoothed training and testing accuracy plots of various approaches on MNIST with 80\% uniform noise.}
    \label{fig:mnist-error-noise80-curves}
\end{figure}

\begin{figure*}[t]
    \centering
    \begin{subfigure}{0.48\textwidth}
    \includegraphics[width=\textwidth]{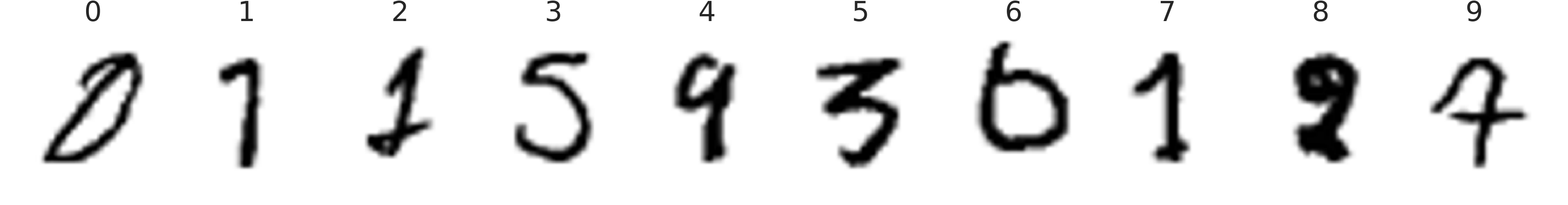}
    \end{subfigure}%
    ~
    \begin{subfigure}{0.48\textwidth}
    \includegraphics[width=\textwidth]{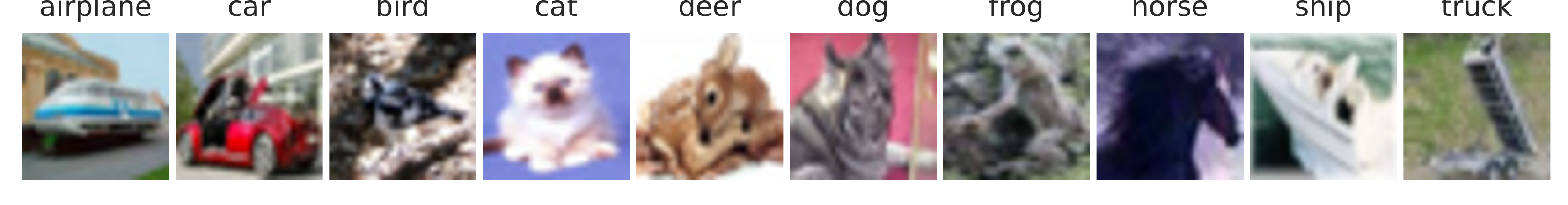}
    \end{subfigure}
    \begin{subfigure}{0.99\textwidth}
    \includegraphics[width=0.99\textwidth]{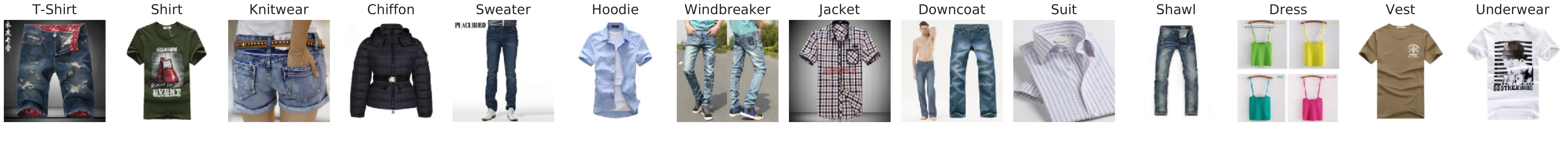}
    \end{subfigure}%
    \caption{Most mislabeled examples in MNIST (top left), CIFAR-10 (top right), and Clothing1M (bottom) datasets, according to the distance between predicted and cross-entropy gradients. More examples are presented in the supplementary (Sec.~\ref{sec:more-results}).}
    \label{fig:confusing-samples}
\end{figure*}

In our approach, the auxiliary network $q$ should not be able to distinguish correct and incorrect samples, unless it overfits.
We found that it learns to predict ``correct'' gradients on examples with incorrect labels (Sec.~\ref{sec:more-results}).
Motivated by this, we use the distance between predicted and cross-entropy gradients to detect samples with incorrect or misleading labels (Fig.~\ref{fig:confusing-samples}).
Additionally, when considering the distance as a score for for classifying correctness of a label,  distance, we get 99.87\% ROC AUC score (Sec.~\ref{sec:more-results}).

\subsection{CIFAR with Uniform and Pair Noise}\label{subsec:cifar-pair}
\begin{table*}[!t]
    \small
    \centering
    \begin{tabular}{lccccccccccc}
    \toprule
    \multirow{3}{*}{Method} & \multicolumn{9}{c}{CIFAR-10} & CIFAR-100 & Clothing1M\\
    \cmidrule(lr){2-10}
    \cmidrule(lr){11-11}
    \cmidrule(lr){12-12}
    & \multicolumn{1}{c}{no noise} & \multicolumn{4}{c}{uniform noise} & \multicolumn{4}{c}{pair noise} & \multicolumn{1}{c}{uniform noise} & \multicolumn{1}{c}{noisy train set}\\
    \cmidrule(lr){2-2}
    \cmidrule(lr){3-6}
    \cmidrule(lr){7-10}
    \cmidrule(lr){11-11}
    \cmidrule(lr){12-12}
     & $0.0$ & $0.2$ & $0.4$  & $0.6$  & $0.8$ & $0.1$ & $0.2$  & $0.3$  & $0.4$ & $0.4$ & -\\
    \midrule
    CE                             & 92.7 & 85.2 & 81.0 & 69.0 & 38.8 & 90.0 & 88.1 & 87.2 & 81.8 & 44.9 $\pm$ 0.5 &  68.91 $\pm$ 0.46\\
    MAE                            & 84.4 & 85.4 & 64.6 & 15.4 & 12.0 & 88.6 & 83.2 & 72.1 & 61.1 &  1.8 $\pm$ 0.1 &  6.52 $\pm$ 0.23\\
    FW                             & 92.9 & 86.2 & 81.4 & 69.7 & 34.4 & 90.1 & 88.0 & 86.8 & 84.6 & 23.3 $\pm$ 0.4 & 68.70 $\pm$ 0.45\\
    DMI                            & 93.0 & 88.3 & 85.0 & 72.5 & 38.9 & 91.4 & 90.6 & 90.4 & 89.6 & 46.1 $\pm$ 0.5 & \textbf{71.19} $\pm$ \textbf{0.43}\\
  \predict$_\mathcal{G}$               & \textbf{93.5} & 90.7 & 86.6 & 73.7 & 38.7 & 92.8 & 91.3 & 89.2 & 86.0 & 58.4 $\pm$ 0.4 & 70.32 $\pm$ 0.42\\
    \predict$_\mathcal{L}$                & 93.1 & 91.5 & 88.2 & 75.7 & 35.8& 91.9 & 91.1 & 88.8 & 84.2 & 49.5 $\pm$ 0.5 & 70.35 $\pm$ 0.45\\
    \predict$_\mathcal{G}$ + init.        & \textbf{93.3} & \textbf{92.4} & \textbf{90.3} & 81.9 & \textbf{44.1} & \textbf{93.3} & \textbf{92.9} & \textbf{90.8} & 88.3 & 59.2 $\pm$ 0.5 & \textbf{71.39} $\pm$ \textbf{0.44}\\
    \predict$_\mathcal{L}$ + init.         & \textbf{93.3} & \textbf{92.2} & \textbf{90.2} & \textbf{82.9} & \textbf{44.3} & \textbf{93.0} & 92.3 & \textbf{91.1} & \textbf{90.0} & \textbf{60.8 $\pm$ 0.5} & 70.53 $\pm$ 0.44\\
    \bottomrule
    \end{tabular}
    \caption{Test accuracy comparison on CIFAR-10, corrupted with various label noise types, on CIFAR-100 with 40\% uniform label noise and on Clothing1M dataset. The error bars are computed by bootstrapping the test set 1000 times. The missing error bars are presented in the supplementary material (Sec.~\ref{sec:more-results}).}
    \label{tab:joint-table}
\end{table*}

Next we consider a harder dataset, CIFAR-10~\cite{krizhevsky2009learning},
with two label noise models: uniform noise and pair noise.
For pair noise, certain classes are confused with some other similar class.
Following the setup of \citet{dmi} we use the following four pairs: truck $\rightarrow$ automobile, bird $\rightarrow$ airplane, deer $\rightarrow$ horse, cat $\rightarrow$ dog.
Note in this type of noise $H(\Y \mid \X)$ is much smaller than in the case of uniform noise.
We split the 50K images of CIFAR-10 into training and validation sets, containing 40K and 10K samples respectively.
For the CIFAR experiments we use ResNet-34 networks~\cite{he2016deep} with standard data augmentation, consisting of random horizontal flips and random 28x28 crops padded back to 32x32.
For our proposed methods, the auxiliary network $q$ is ResNet-34 as well.
We noticed that for more difficult datasets, it may happen that while $q$ still learns to produce good gradients, the updates with these less informative gradients may corrupt the initialization of the classifier.
For this reason, we add an additional variant of LIMIT, which initializes the $q$ network with the best CE baseline, similar to the DMI and FW baselines.

Table~\ref{tab:joint-table} presents the results on CIFAR-10.
Again, variants of LIMIT improve significantly over standard baselines, especially in the case of uniform label noise.
As expected, when $q$ is initialized with the best CE model (similar to FW and DMI), the results are better.
As in the case of MNIST, our approach helps even when the dataset is noiseless.

\textbf{CIFAR-100.\quad}
To test proposed methods on a classification task with many classes, we apply them on CIFAR-100 with 40\% uniform noise.
We use the same networks as in the case of CIFAR-10.
Results presented in Table~\ref{tab:joint-table} indicate several interesting phenomena.
First, training with the MAE loss fails, which was observed by other works as well~\cite{gce}.
The gradient of MAE with respect to logits is $f(x)_y (\widehat{y}-y)$.
When $f(x)_y$ is small, there is small signal to fix the mistake.
In fact, in the case of CIFAR-100, $f(x)_y$ is approximately 0.01 in the beginning, slowing down the training.
The performance of FW degrades as the approximation error of noise transition matrix become large.
The DMI does not give significant improvement over CE due to numerical issues with computing a determinant of a 100x100 confusion matrix.
LIMIT$_\mathcal{L}$ performs worse than other variants, as training $q$ with MAE becomes challenging.
However, performance improves when $q$ is initialized with the CE model.
LIMIT$_\mathcal{G}$ does not suffer from the mentioned problem and works with or without initialization.

\vspace{-0.35em}
\subsection{Clothing1M}
Finally, as in our last experiment, we consider the Clothing1M dataset~\cite{xiao2015learning}, which has 1M images labeled with one of 14 possible clothing labels.
The dataset has very noisy training labels, with roughly 40\% of examples incorrectly labeled.
More importantly, the label noise in this dataset is realistic and instance dependent.
For this dataset we use ResNet-50 networks and employ standard data augmentation, consisting of random horizontal flips and random crops of size 224x224 after resizing images to size 256x256.
The results shown in the last column of Table ~\ref{tab:joint-table} demonstrate that DMI and \predict{} with initialization perform the best, producing similar results.

\section{Related Work}\label{sec:related}
Our approach is related to many works that study memorization and learning with noisy labels.
Our work also builds on theoretical results studying how generalization relates to information in neural network weights.
In this section we present the related work and discuss the connections.

\subsection{Learning with Noisy Labels}
Learning with noisy labels is a longstanding problem and has been studied extensively~\cite{survey}.
Many works studied and proposed loss functions that are robust to label noise.
\citet{natarajan2013learning} propose robust loss functions for binary classification with label-dependent noise.
\citet{mae} generalize this result for multiclass classification problem and show that the mean absolute error (MAE) loss function is tolerant to label-dependent noise.
\citet{gce} propose a new loss function, called generalized cross-entropy (GCE), that interpolates between MAE and CE with a single parameter $q \in [0,1]$.
\citet{dmi} propose a new loss function (DMI), which is equal to the log-determinant of the confusion matrix between predicted and given labels, and show that it is robust to label-dependent noise.
These loss functions are robust in the sense that the best performing hypothesis on clean data and noisy data are the same in the regime of infinite data.
When training on finite datasets, training with these loss functions may result in memorization of training labels.

Another line of research seeks to estimate label-noise and correct the loss function accordingly~\cite{Sukhbaatar2014TrainingCN,massive,goldberger2016training, patrini2017making,hendrycks2018using,safeguard2019}.
Some works use meta-learning to treat the problem of noisy/incomplete labels as a decision problem in which one determines the reliability of a sample~\cite{jiang2017mentornet, ren2018learning, shu2019meta}.
Others seek to detect incorrect examples and relabel them~\cite{Reed2014TrainingDN, tanaka2018joint, ma2018dimensionality, han2019deep, arazo2019unsupervised}.
\citet{han2018co, Yu2019HowDD} employ an approach where two networks select training examples for each other using the small-loss trick. While our approach also has a teaching component, the network uses all samples instead of filtering.
\citet{li2019learning} propose a meta-learning approach that optimizes a classification loss along with a consistency loss between predictions of a mean teacher and predictions of the model after a single gradient descent step on a synthetically labeled mini-batch.


Some approaches assume particular label-noise models, while our approach assumes that $H(\Y \mid \X) > 0$, which may happen because of any type of label noise or attribute noise (e.g., corrupted images or partially observed inputs).
Additionally, the techniques used to derive our approach can be adopted for regression or multilabel classification tasks.
Furthermore, some methods require access to small clean validation data, which is not required in our approach.

\subsection{Information in Weights and Generalization}
Defining and quantifying information in neural network weights is an open challenge and has been studied by multiple authors.
One approach is to relate information in weights to their description length.
A simple way of measuring description length was proposed by \citet{hinton93MDL} and reduces to the L2 norm of weights.
Another way to measure it is through the intrinsic dimension of an objective landscape~\cite{li2018intrinsic, blier2018description}.
\citet{li2018intrinsic} observed that the description length of neural network weights grows when they are trained with noisy labels~\cite{li2018intrinsic}, indicating memorization of labels.

\citet{emergence} define information in weights as the KL divergence from the posterior of weights to the prior.
In a subsequent study they provide generalization bounds involving the KL divergence term~\cite{achille2019information}.
Similar bounds were derived in the PAC-Bayesian setup and have been shown to be non-vacuous~\cite{nonvacuous}.
With an appropriate selection of prior on weights, the above KL divergence becomes the Shannon mutual information between the weights and training dataset, $I(w ; S)$.
\citet{xu2017information} derive generalization bounds that involve this latter quantity.
\citet{pensia2018generalization} upper bound $I(w ; S)$ when the training algorithm consists of iterative noisy updates.
They use the chain-rule of mutual information as we did in (\ref{eq:chain-rule}) and bound information in updates by adding independent noise.
It has been observed that adding noise to gradients can help to improve generalization in certain cases~\cite{neelakantan2015adding}.
Another approach restricts information in gradients by clipping them~\cite{menon2020can} .

\citet{emergence} also introduce the term $I(w ; \Y \mid \X)$ and show the decomposition of the  cross-entropy described in (\ref{eq:ce-decomp}).
In a recent work, \citet{yin2020metalearning} consider a similar term in the context of meta-learning and use it as a regularization to prevent memorization of meta-testing labels.
Given a meta-learning dataset $\mathcal{M}$, they consider the information in the meta-weights $\theta$ about the labels of meta-testing tasks given the inputs of meta-testing tasks, $I(\theta ; \mathcal{Y}\mid \mathcal{X})$.
They bound this information with a variational upper bound $\text{KL}\left(q(\theta \mid \mathcal{M}) || r(\theta)\right)$ and use multivariate Gaussian distributions for both.
For isotropic Gaussians with equal covariances, the KL divergence reduces to $\lVert \theta - \theta_0 \rVert_2^2$, which was studied by~\citet{hu2020simple} as a regularization to achieve robustness to label-noise.
Note that this bounds not only $I(\theta ; \mathcal{Y} \mid \mathcal{X})$ but also $I(\theta ; \mathcal{X}, \mathcal{Y})$. 
In contrast, we bound only $I(w ; \Y \mid \X)$ and work with information in gradients.

\section{Conclusion and Future Work}
Several recent theoretical works have highlighted the importance of the information about the training data that is memorized in the weights.
We distinguished two components of it and demonstrated that the conditional mutual information of weights and labels given inputs is closely related to memorization of labels and generalization performance.
By bounding this quantity in terms of information in gradients, we were able to derive the first practical schemes for controlling label information in the weights and demonstrated that this outperforms approaches for learning with noisy labels.
In the future, we plan to explore ways of improving the bound of (\ref{eq:chain-rule}) and to design a better bottleneck in the gradients. Additionally, we aim to extend the presented ideas to reducing instance memorization.

\section*{Acknowledgements}
We thank Alessandro Achille for his valuable comments. We also thank the anonymous reviewers whose suggestions helped improve this manuscript. Hrayr Harutyunyan was supported by the USC Annenberg Fellowship. This work is based in part on research sponsored by Air Force Research Laboratory (AFRL) under agreement number FA8750-19-1-1000. The U.S. Government is authorized to reproduce and distribute reprints for Government purposes notwithstanding any copyright notation therein. The views and conclusions contained herein are those of the authors and should not be interpreted as necessarily representing the official policies or endorsements, either expressed or implied, of Air Force Laboratory, DARPA or the U.S. Government.

\bibliography{main}
\bibliographystyle{icml2020}

\clearpage
\onecolumn
\appendix

\renewcommand{\thefigure}{A\arabic{figure}}
\setcounter{figure}{-1}

\begin{center}
\textbf{\large Supplementary material: Improving Generalization by Controlling Label-Noise Information in Neural Network Weights}
\end{center}
\vskip 2em

\begin{algorithm}[h]
\caption{\footnotesize LIMIT: limiting label information memorization in training.\\Our implementation is available at \url{https://github.com/hrayrhar/limit-label-memorization}.}
\begin{algorithmic}
    \STATE {\bfseries Input:} Training dataset $\{(x^{(i)}, y^{(i)})\}_{i=1}^n$.
    \STATE {\bfseries Input:} Gradient norm regularization coefficient $\beta$.
    \COMMENT{$\lambda$ is set to $1$}
    \STATE Initialize classifier $f(y \mid x, w)$ and gradient predictor $q_\phi(\cdot \mid \X, g_{<t})$.
    \FOR{$t=1..T$}
        \STATE Fetch the next batch $(x_t, y_t)$ and compute the predicted logits $a_t$.
        \STATE Compute the cross-entropy gradient, $g^\mathcal{L}_t \leftarrow s(a_t) - y_t$.
        \IF{sampling of gradients is enabled}
            \STATE $g_t \sim q_\phi(\cdot \mid \X, g_{<t})$.
        \ELSE
            \STATE $g_t \leftarrow \mu_t$ \COMMENT{the mean of predicted gradient}
        \ENDIF
        \STATE Starting with $g_t$, backpropagate to compute the gradient with respect to $w$.
        \STATE Update $w_{t-1}$ to $w_t$.
        \STATE Update $\phi$ using the gradient of the following loss: $-\log q_\phi(\tilde{g}^\mathcal{L}_t \mid \X, g_{<t}) + \beta \lVert \mu_t \rVert_2^2$.
    \ENDFOR
\end{algorithmic}
\label{alg:training-loop}
\end{algorithm}

\section{Proofs}
This section presents the proofs and some remarks that were not included in the main text due to space constraints.

\subsection{Proof of Thm.~\ref{thm:fano}}\label{subsec:fano_proof}
\begin{theorem}{\textbf{(Thm.~\ref{thm:fano} restated)}}
Consider a dataset $S=(\X, \Y)$ of $n$ i.i.d. samples,
$\X = \{x^{(i)}\}_{i=1}^n$ and $\Y = \{y^{(i)}\}_{i=1}^n$,
where the domain of labels is a finite set, $\mathcal{Y}$, with $|\mathcal{Y}| > 2$.
Let $\mathcal{A}(w \mid S)$ be any training algorithm, producing weights for possibly stochastic classifier $f(y \mid x, w)$.
Let $\widehat{y}^{(i)}$ denote the prediction of the classifier on $i$-th example and $e^{(i)} = \mathds{1}\{\widehat{y}^{(i)} \neq y^{(i)}\}$ be a random variable corresponding to predicting $y^{(i)}$ incorrectly.
Then, the following holds
\begin{equation*}
\mathbb{E}\left[\sum_{i=1}^n e^{(i)}\right] \ge \frac{H(\Y \mid \X) - I(w ; \Y \mid \X) - \sum_{i=1}^n H(e^{(i)})}{ \log \left(\lvert \mathcal{Y} \rvert - 1\right)}.
\end{equation*}
\label{thm:fano_repeated}
\end{theorem}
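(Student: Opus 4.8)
The plan is to recognize the numerator as the conditional entropy $H(\Y\mid\X,w)$ and then apply Fano's inequality term by term. Since $I(w;\Y\mid\X) = H(\Y\mid\X) - H(\Y\mid\X,w)$ by definition of conditional mutual information, the claimed bound is equivalent to
\begin{equation*}
H(\Y\mid\X,w) \le \sum_{i=1}^n H(e^{(i)}) + \log\!\left(|\mathcal{Y}|-1\right)\,\mathbb{E}\!\left[\sum_{i=1}^n e^{(i)}\right],
\end{equation*}
so it suffices to establish this inequality and then divide by $\log(|\mathcal{Y}|-1)$, which is positive because $|\mathcal{Y}|>2$.

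The next step is to break the left-hand side into per-example pieces. By the chain rule, $H(\Y\mid\X,w) = \sum_{i=1}^n H\!\left(y^{(i)}\mid y^{(1)},\dots,y^{(i-1)},\X,w\right)$, and since conditioning cannot increase entropy, each term is at most $H(y^{(i)}\mid x^{(i)},w)$. Now the prediction $\widehat{y}^{(i)}$ is produced from $(x^{(i)},w)$ — together with internal randomness that, in the stochastic case, is independent of $y^{(i)}$ given $(x^{(i)},w)$ — so conditioning on $(x^{(i)},w)$ is at least as informative as conditioning on $\widehat{y}^{(i)}$; hence $H(y^{(i)}\mid x^{(i)},w) \le H(y^{(i)}\mid \widehat{y}^{(i)})$.

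Then I would invoke the textbook Fano inequality for each $i$: with $e^{(i)} = \mathds{1}\{\widehat{y}^{(i)}\neq y^{(i)}\}$ and labels in a set of size $|\mathcal{Y}|$,
\begin{equation*}
H(y^{(i)}\mid \widehat{y}^{(i)}) \le H(e^{(i)}) + \Pr(e^{(i)}=1)\,\log\!\left(|\mathcal{Y}|-1\right) = H(e^{(i)}) + \mathbb{E}[e^{(i)}]\,\log\!\left(|\mathcal{Y}|-1\right),
\end{equation*}
where I have used that $e^{(i)}$ is Bernoulli so $H(e^{(i)})$ is exactly the binary entropy of its error probability. Summing over $i$ and chaining with the two inequalities from the previous paragraph gives the displayed bound, and rearranging yields the theorem.

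The only non-mechanical point — and hence the main thing to argue carefully — is the inequality $H(y^{(i)}\mid x^{(i)},w) \le H(y^{(i)}\mid \widehat{y}^{(i)})$, which relies on $\widehat{y}^{(i)}$ being a (possibly randomized) function of $(x^{(i)},w)$ whose auxiliary randomness is conditionally independent of $y^{(i)}$ given $(x^{(i)},w)$, i.e.\ the classifier does not access the true label when forming its prediction; this is the natural reading of the setup $\mathcal{A}(w\mid S)$ followed by $f(y\mid x,w)$. Everything else is the chain rule, ``conditioning reduces entropy,'' and standard Fano. I would also remark that for $|\mathcal{Y}|=2$ the denominator vanishes, so the argument instead terminates at the display above and reduces to the entropy form noted in Remark~2.
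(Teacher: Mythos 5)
Your proof is correct and follows essentially the same route as the paper's: a per-example application of Fano's inequality for the Markov chain $y^{(i)} \rightarrow (x^{(i)}, w) \rightarrow \widehat{y}^{(i)}$, a summation step that relates $\sum_{i} H(y^{(i)} \mid x^{(i)}, w)$ to $H(\Y \mid \X, w)$ (your chain rule plus conditioning-reduces-entropy is the paper's non-negativity of total correlation), and the identity $H(\Y \mid \X, w) = H(\Y \mid \X) - I(w ; \Y \mid \X)$. The only differences are presentational: you unpack the data-processing step inside Fano's inequality explicitly and state the bound in rearranged form.
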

\begin{proof}
For each example we consider the following Markov chain:
\begin{equation*}
y^{(i)} \rightarrow \left[\begin{matrix} \X \\ \Y \end{matrix}\right] \rightarrow \left[\begin{matrix} x^{(i)} \\ w \end{matrix}\right] \rightarrow \widehat{y}^{(i)}.
\end{equation*}
In this setup Fano's inequality gives a lower bound for the error probability:
\begin{equation}
H(e^{(i)}) + \mathbb{P}(e^{(i)} = 1) \log \left(\lvert \mathcal{Y} \rvert - 1\right) \ge H(y^{(i)} \mid x^{(i)}, w),
\label{eq:original-fano}
\end{equation}
which can be written as:
\begin{equation*}
\mathbb{P}(e^{(i)} = 1) \ge \frac{H(y^{(i)} \mid x^{(i)}, w) - H(e^{(i)})}{ \log \left(\lvert \mathcal{Y} \rvert - 1\right)}.
\end{equation*}
Summing this inequality for $i=1,\ldots,n$ we get
\begin{align*}
\sum_{i=1}^n \mathbb{P}(e^{(i)} = 1) &\ge \frac{\sum_{i=1}^n\left(H(y^{(i)} \mid x^{(i)}, w) - H(e^{(i)})\right)}{ \log \left(\lvert \mathcal{Y} \rvert - 1\right)}\\
&\ge \frac{\sum_{i=1}^n\left(H(y^{(i)} \mid \X, w) - H(e^{(i)})\right)}{ \log \left(\lvert \mathcal{Y} \rvert - 1\right)}\\
&\ge \frac{H(\Y \mid \X, w) - \sum_{i=1}^n H(e^{(i)})}{ \log \left(\lvert \mathcal{Y} \rvert - 1\right)}.
\end{align*}
The correctness of the last step follows from the fact that total correlation is always non-negative~\cite{cover}:
\begin{equation*}
\sum_{i=1}^n H(y^{(i)} \mid \X, w) - H(\Y \mid \X, w) = \text{TC}(\Y \mid \X, w) \ge 0.
\end{equation*}
Finally, using the fact that $H(\Y \mid \X, w) = H(\Y \mid \X) - I(w ; \Y \mid \X)$, we get that the desired result:
\begin{equation}
\mathbb{E}\left[\sum_{i=1}^n e^{(i)}\right] \ge \frac{H(\Y \mid \X) - I(w ; \Y \mid \X) - \sum_{i=1}^n H(e^{(i)})}{ \log \left(\lvert \mathcal{Y} \rvert - 1\right)}.
\label{eq:lower-bound-num-errors}
\end{equation}
\end{proof}

\subsection{Proof of Prop.~\ref{prop:gradient_capacity}}\label{subsec:grad_info_proof}
\begin{proposition}{\textbf{(Prop.~\ref{prop:gradient_capacity} restated)}}
If $g_t = \mu_t + \epsilon_t$, where $\epsilon_t \sim \mathcal{N}(0, \sigma_q^2 I_d)$ is an independent noise and $\mathbb{E}\left[ \mu^T_t \mu_t \right] \le L^2$, then the following inequality holds:
\begin{align*}
I(g_t ; \Y \mid \X, g_{<t})) \le \frac{d}{2}\log\left(1 + \frac{L^2}{d\sigma_q^2}\right).
\end{align*}
\label{prop:gradient_capacity_restated}
\end{proposition}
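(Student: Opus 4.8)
The plan is to read $g_t=\mu_t+\epsilon_t$ as the output of an additive-noise channel whose input $\mu_t$ carries all of the label information and obeys the power constraint $\mathbb{E}[\mu_t^T\mu_t]\le L^2$, and then to invoke the elementary Gaussian-channel capacity estimate. Write $C=(\X,g_{<t})$ for the conditioning variables and start from
\[
I(g_t ; \Y \mid \X, g_{<t}) = h(g_t \mid C) - h(g_t \mid \Y, C).
\]
First I would dispatch the subtracted term. Since $\epsilon_t$ is independent of $(\mu_t,\Y,\X,g_{<t})$, conditioning additionally on $\mu_t$ only lowers entropy and makes $g_t$ equal to $\epsilon_t$ plus a constant, so
\[
h(g_t \mid \Y, C) \ge h(g_t \mid \Y, C, \mu_t) = h(\epsilon_t) = \tfrac{d}{2}\log\left(2\pi e\,\sigma_q^2\right).
\]
Equivalently, $\Y\to(\mu_t,C)\to g_t$ is a Markov chain, so the data-processing inequality gives $I(g_t;\Y\mid C)\le I(g_t;\mu_t\mid C)=h(g_t\mid C)-h(\epsilon_t)$.

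Next I would bound $h(g_t\mid C)$ from above. For each fixed value $c$ of $C$, the differential entropy of a random vector in $\mathbb{R}^d$ is at most that of a Gaussian with the same covariance, and applying AM--GM to the eigenvalues of that covariance gives $h(g_t\mid C=c)\le\tfrac{d}{2}\log\left(\tfrac{2\pi e}{d}\,\mathrm{tr}\,\mathrm{Cov}(g_t\mid C=c)\right)\le\tfrac{d}{2}\log\left(\tfrac{2\pi e}{d}\,\mathbb{E}[\lVert g_t\rVert^2\mid C=c]\right)$. Averaging over $C$ and using concavity of $\log$ (Jensen) yields $h(g_t\mid C)\le\tfrac{d}{2}\log\left(\tfrac{2\pi e}{d}\,\mathbb{E}\lVert g_t\rVert^2\right)$, and because $\epsilon_t$ is zero-mean and independent of $\mu_t$ the cross term in $\mathbb{E}\lVert g_t\rVert^2=\mathbb{E}\lVert\mu_t\rVert^2+2\mathbb{E}[\mu_t^T\epsilon_t]+\mathbb{E}\lVert\epsilon_t\rVert^2$ vanishes, leaving $\mathbb{E}\lVert g_t\rVert^2\le L^2+d\sigma_q^2$. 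Subtracting the two bounds and simplifying,
\[
I(g_t ; \Y \mid \X, g_{<t}) \le \tfrac{d}{2}\log\frac{L^2+d\sigma_q^2}{d\sigma_q^2} = \tfrac{d}{2}\log\left(1+\frac{L^2}{d\sigma_q^2}\right),
\]
which is the claimed inequality.

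I do not expect a genuine obstacle, only one point that must be handled with care: the conditioning. The mean $\mu_t=s(a_t)-s(r_\phi(x_t))$ depends on $w_{t-1}$ (hence on $w_0$ and $\phi$) and is in general \emph{not} a deterministic function of $(\X,\Y,g_{<t})$, so one cannot simply assert $h(g_t\mid\X,\Y,g_{<t})=h(\epsilon_t)$; the inequality $h(g_t\mid\X,\Y,g_{<t})\ge h(g_t\mid\X,\Y,g_{<t},\mu_t)$ --- that conditioning reduces entropy --- together with the independence of $\epsilon_t$ is precisely what legitimizes the step. Finally, note that only the evaluation $h(\epsilon_t)=\tfrac{d}{2}\log(2\pi e\sigma_q^2)$ uses Gaussianity; every other step uses only that $\epsilon_t$ is independent, zero-mean, with per-coordinate variance $\sigma_q^2$, which is why the argument localizes the use of the noise model to a single line and an analogous bound (with the noise entropy updated accordingly) applies to the product-Laplace case.
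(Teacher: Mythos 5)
Your proof is correct and follows essentially the same route as the paper's: bound the second moment of $g_t$ by $L^2+d\sigma_q^2$, invoke the Gaussian maximum-entropy bound for that moment constraint, and subtract the noise entropy $\tfrac{d}{2}\log(2\pi e\sigma_q^2)$, with only that last evaluation using Gaussianity. Your handling of the conditioning is in fact slightly more careful than the paper's, which asserts $H(g_t\mid \X,\Y,g_{<t})=H(\epsilon_t)$ outright, whereas you correctly note that the inequality $H(g_t\mid \X,\Y,g_{<t})\ge H(\epsilon_t)$ (via conditioning on $\mu_t$ and independence of $\epsilon_t$) is what is needed and suffices.
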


\begin{proof}
Given that $\epsilon_t$ and $\mu_t$ are independent, let us bound the expected L2 norm of $g_t$:
\begin{align*}
\mathbb{E}\left[ g_t^T g_t \right] &= \mathbb{E}\left[ (\epsilon_t + \mu_t)^T (\epsilon_t + \mu_t) \right]\\
&=\mathbb{E}\left[\epsilon_t^T \epsilon_t\right] + \mathbb{E}\left[\mu_t^T\mu_t\right]\\
&\le d \sigma_q^2  + L^2.
\end{align*}
Among all random variables $Z$ with $\mathbb{E}[Z^T Z] \le C$ the Gaussian distribution $Y \sim \mathcal{N}\left(0, \frac{C}{d} I_d\right)$ has the largest entropy, given by $H(Y) = \frac{d}{2}\log\left(\frac{2\pi e C}{d}\right)$.
Therefore,
\begin{align*}
H(g_t) \le \frac{d}{2}\log\left(\frac{2\pi e (d \sigma_q^2 + L^2)}{d}\right).
\end{align*}
With this we can upper bound the $I(g_t ; \Y \mid \X, g_{<t})$ as follows:
\begin{align*}
I(g_t ; \Y \mid \X, g_{<t}) &= H(g_t \mid \X, g_{<t}) - H(g_t \mid \X, \Y, g_{<t})\\
&= H(g_t \mid \X, g_{<t}) - H(\epsilon_t)\\
&\le \frac{d}{2}\log\left(\frac{2\pi e (d \sigma_q^2 + L^2)}{d}\right)\numberthis\label{eq:max-ent} - \frac{d}{2}\log\left(2\pi e \sigma_q^2\right)\\
&= \frac{d}{2}\log\left(1 + \frac{L^2}{d\sigma_q^2}\right).
\end{align*}
\end{proof}
Note that the proof will work for arbitrary $\epsilon_t$ that has zero mean and independent components, where the L2 norm of each component is bounded by $\sigma_q^2$.
This holds because in such cases $H(\epsilon_t) \le \frac{d}{2} \log(2\pi e \sigma^2_q)$ (as Gaussians have highest entropy for fixed L2 norm) and the transition of (\ref{eq:max-ent}) remains correct.
Therefore, the same result holds when $\epsilon_t$ is sampled from a product of univariate zero-mean Laplace distributions with scale parameter $\sigma_q/\sqrt{2}$ (which makes the second moment equal to $\sigma_q^2$).

A similar result has been derived by \citet{pensia2018generalization} (lemma 5) to bound $I(w_t ; (x_t, y_t) \mid w_{t-1})$.

\section{Experimental Details}\label{sec:experimental-details}
In this section we describe the details of experiments and implementations.

\paragraph{Classifier architectures.}
The architecture of classifiers used in MNIST experiments is presented in Table~\ref{tab:mnist-classifier}.
The ResNet-34 used in CIFAR-10 and CIFAR-100 experiments differs from the standard ResNet-34 architecture (which is used for $224 \times 224$ images) in two ways: (a) the first convolutional layer has 3x3 kernels and stride 1 and (b) the max pooling layer after it is skipped.
The architecture of ResNet-50 used in the Clothing1M experiment follows the original~\cite{he2016deep}.

\begin{table}[t]
    \centering
    \small
    \begin{tabular}{ll}
    \toprule
    Layer type & Parameters\\
    \midrule
    Conv & 32 filters, $4 \times 4$ kernels, stride 2, padding 1, batch normalization, ReLU\\
    Conv & 32 filters, $4 \times 4$ kernels, stride 2, padding 1, batch normalization, ReLU\\
    Conv & 64 filters, $3 \times 3$ kernels, stride 2, padding 0, batch normalization, ReLU\\
    Conv & 256 filters, $3 \times 3$ kernels, stride 1, padding 0, batch normalization, ReLU\\
    FC & 128 units, ReLU\\
    FC & 10 units, linear activation\\
    \bottomrule
    \end{tabular}
    \caption{The architecture of MNIST classifiers.}
\label{tab:mnist-classifier}
\end{table}

\paragraph{Hyperparameter search.}
The CE, MAE, and FW baselines have no hyperparameters.
For the DMI, we tuned the learning rate by setting the best value from the following list: $\{10^{-3},10^{-4},10^{-5},10^{-6}\}$.
The soft regularization approach of (\ref{eq:penalize-final}) has two hyperparameters: $\lambda$ and $\beta$.
We select $\lambda$ from $[0.001, 0.01, 0.03, 0.1]$ and $\beta$ from $[0.0, 0.01, 0.1, 1.0, 10.0]$.
The objective of LIMIT instances has two terms: $\lambda H_{p,q}$ and $\beta \lVert \mu_t \rVert_2^2$. 
Consequently, we need only one hyperparameter instead of two. We choose to set $\lambda = 1$ and select $\beta$ from $[0.0, 0.1, 0.3, 1.0, 3.0, 10.0, 30.0, 100.0]$.
When sampling is enabled, we select $\sigma_q$ from $[0.01, 0.03, 0.1, 0.3]$.
In MNIST and CIFAR experiments, we trained all models for 400 epochs and terminated the training early when the best validation accuracy was not improved in the last 100 epochs.
All models for Clothing1M were trained for 30 epochs.

\section{Additional Results}\label{sec:more-results}

\paragraph{Effectiveness of gradient norm penalty.} In the main text we discussed that the proposed approach may overfit if the gradient predictor $q_\phi(\cdot \mid \X, g_{<t})$ overfits and proposed to penalize the L2 norm of predicted gradients as a simply remedy for this issue.
To demonstrate the effectiveness of this regularization, we present the training and testing accuracy curves of LIMIT with varying values of $\beta$ in Fig.~\ref{fig:mnist-error-noise80-beta}.
We see that increasing $\beta$ decreases overfitting on the training set and usually results in better generalization.

\begin{figure*}[t]
    \centering
    \begin{subfigure}{0.49\textwidth}
    \includegraphics[width=\textwidth]{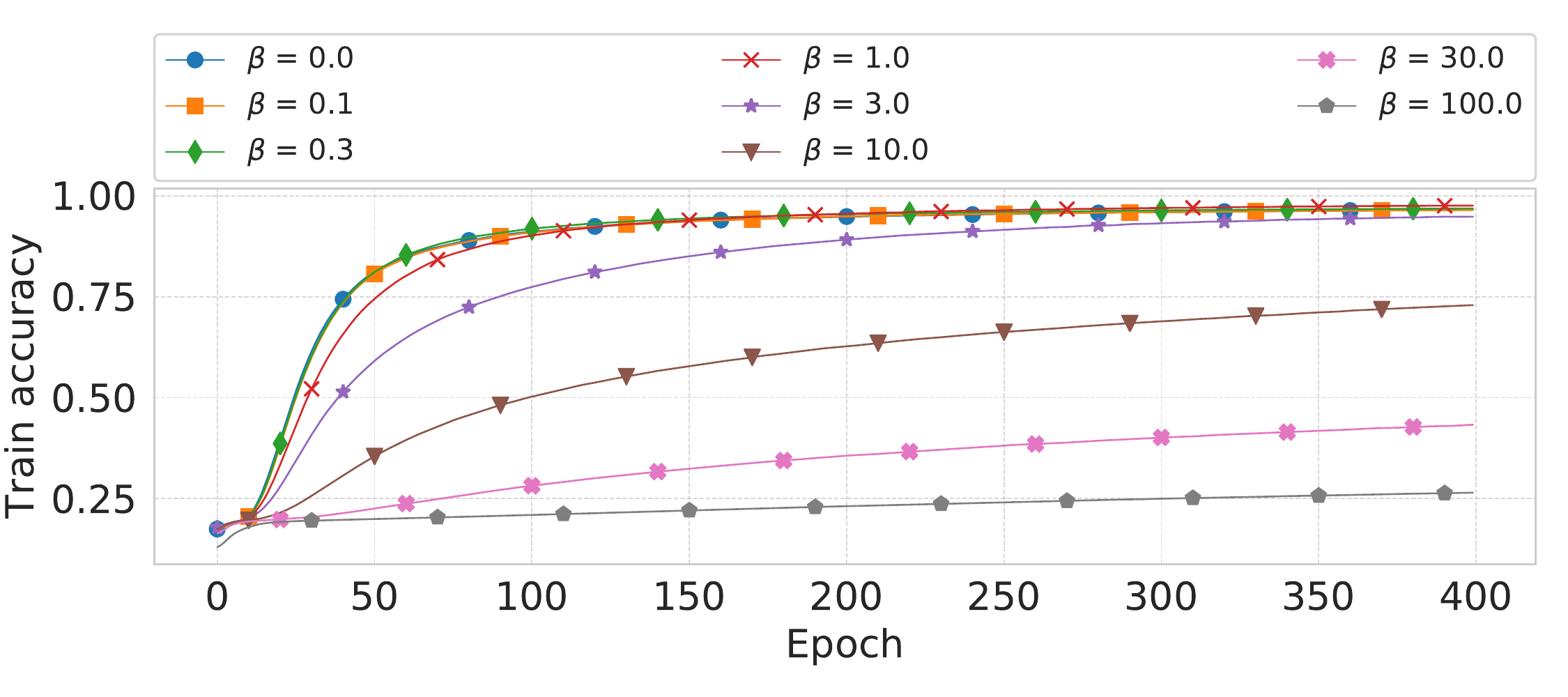}
    \caption{Training performance of LIMIT$_\mathcal{G} - S$}
    \end{subfigure}%
    \begin{subfigure}{0.49\textwidth}
    \includegraphics[width=\textwidth]{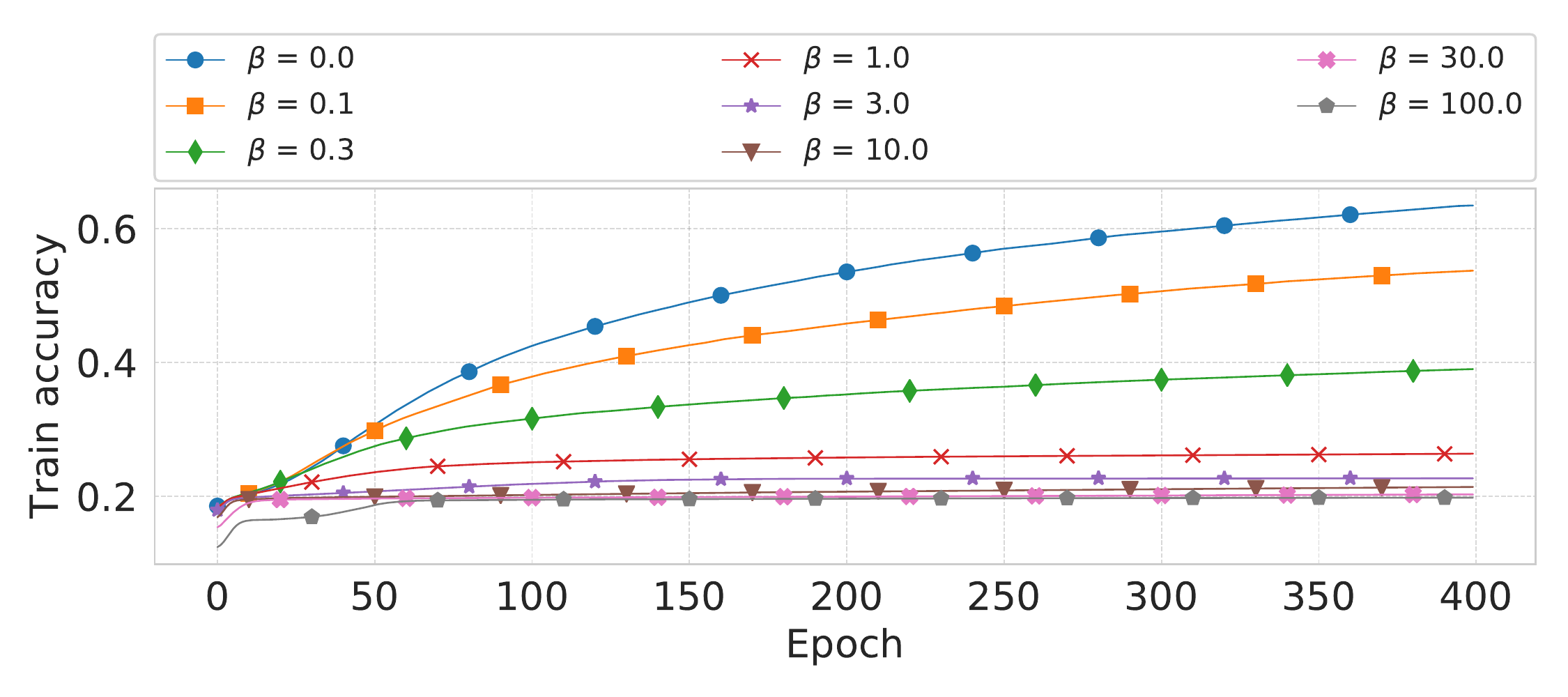}
    \caption{Training performance of LIMIT$_\mathcal{L} - S$}
    \end{subfigure}
    
    \begin{subfigure}{0.49\textwidth}
    \includegraphics[width=\textwidth]{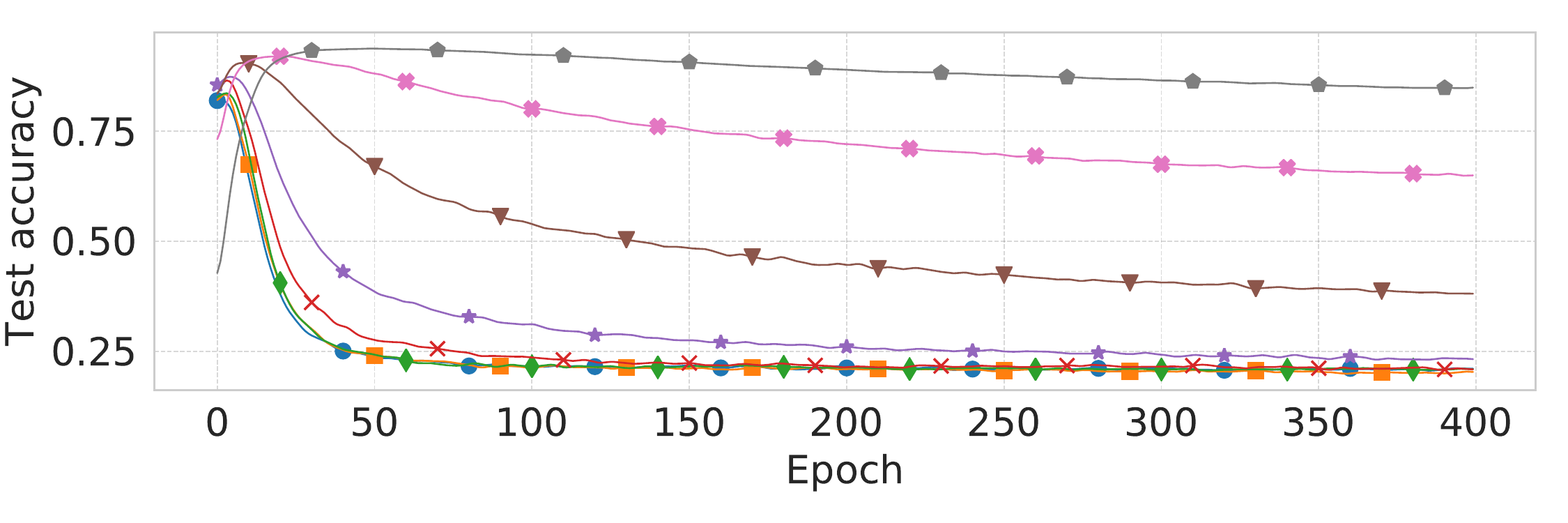}
    \caption{Testing performance of LIMIT$_\mathcal{G} - S$}
    \end{subfigure}%
    \begin{subfigure}{0.49\textwidth}
    \includegraphics[width=\textwidth]{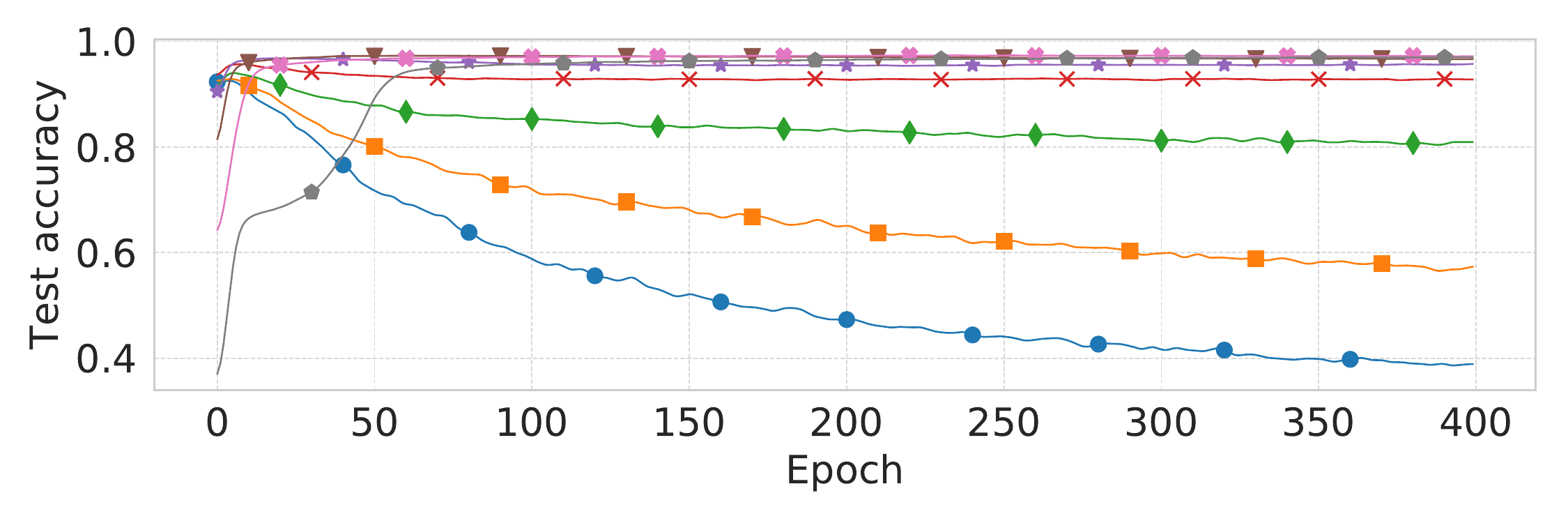}
    \caption{Testing performance LIMIT$_\mathcal{L} - S$}
    \end{subfigure}
    \caption{Training and testing accuracies of ``LIMIT$_\mathcal{G} - S$'' and ``LIMIT$_\mathcal{L} - S$'' instances with varying values of $\beta$ on MNIST with 80\% uniform label noise.
    The curves are smoothed for better presentation.}
    \label{fig:mnist-error-noise80-beta}
\end{figure*}

\paragraph{Detecting incorrect samples.}
In the proposed approach, the auxiliary network $q$ should not be able to distinguish correct and incorrect samples, unless it overfits.
In fact, Fig.~\ref{fig:grad-norm-hist} shows that if we look at the norm of predicted gradients, examples with correct and incorrect labels are indistinguishable in easy cases (MNIST with 80\% uniform noise and CIFAR-10 with 40\% uniform noise) and have large overlap in harder cases (CIFAR-10 with 40\% pair noise and CIFAR-100 with 40\% uniform noise).
Therefore, we hypothesize that the auxiliary network learns to utilize incorrect samples effectively by predicting ``correct'' gradients.
This also hints that the distance between the predicted and cross-entropy gradients might be useful for detecting samples with incorrect or confusing labels.
Fig.~\ref{fig:grad-diff-hist} confirms this intuition, demonstrating that this distance separates correct and incorrect samples perfectly in easy cases (MNIST with 80\% uniform noise and CIFAR-10 with 40\% uniform noise) and separates them well in harder cases (CIFAR-10 with 40\% pair noise and CIFAR-100 with 40\% uniform noise).
If we interpret this distance as a score for classifying correctness of a label, we get 91.1\% ROC AUC score in the hardest case: CIFAR-10 with 40\% pair noise, and more than 99\% score in the easier cases.
Motivated by this results, we use this analysis to detect samples with incorrect or confusing labels in the original MNIST, CIFAR-10, and Clothing1M datasets.
We present a few incorrect/confusing labels for each class in Figures \ref{fig:mnist-cifar-confusing-more-examples} and \ref{fig:clothgin1m-confusing-more-examples}.

\paragraph{Quantitative results.} Tables \ref{tab:mnist-with-error-bars-1}, \ref{tab:mnist-with-error-bars-2}, \ref{tab:cifar10_error_with_error_bars}, and \ref{tab:cifar10_custom_with_error_bars} present test accuracy comparisons on multiple corrupted versions of MNIST and CIFAR-10.
The presented error bars are standard deviations.
In case of MNIST, we compute them over 5 training/validation splits.
In the case of CIFAR-10, due to high computational cost, we have only one run for each model and dataset pair.
The standard deviations are computed by resampling the corresponding test sets 1000 times with replacement.

\begin{figure}[t]
    \captionsetup[subfigure]{justification=centering}
    \centering
    \begin{subfigure}{0.242\textwidth}
    \centering
    \includegraphics[width=\textwidth]{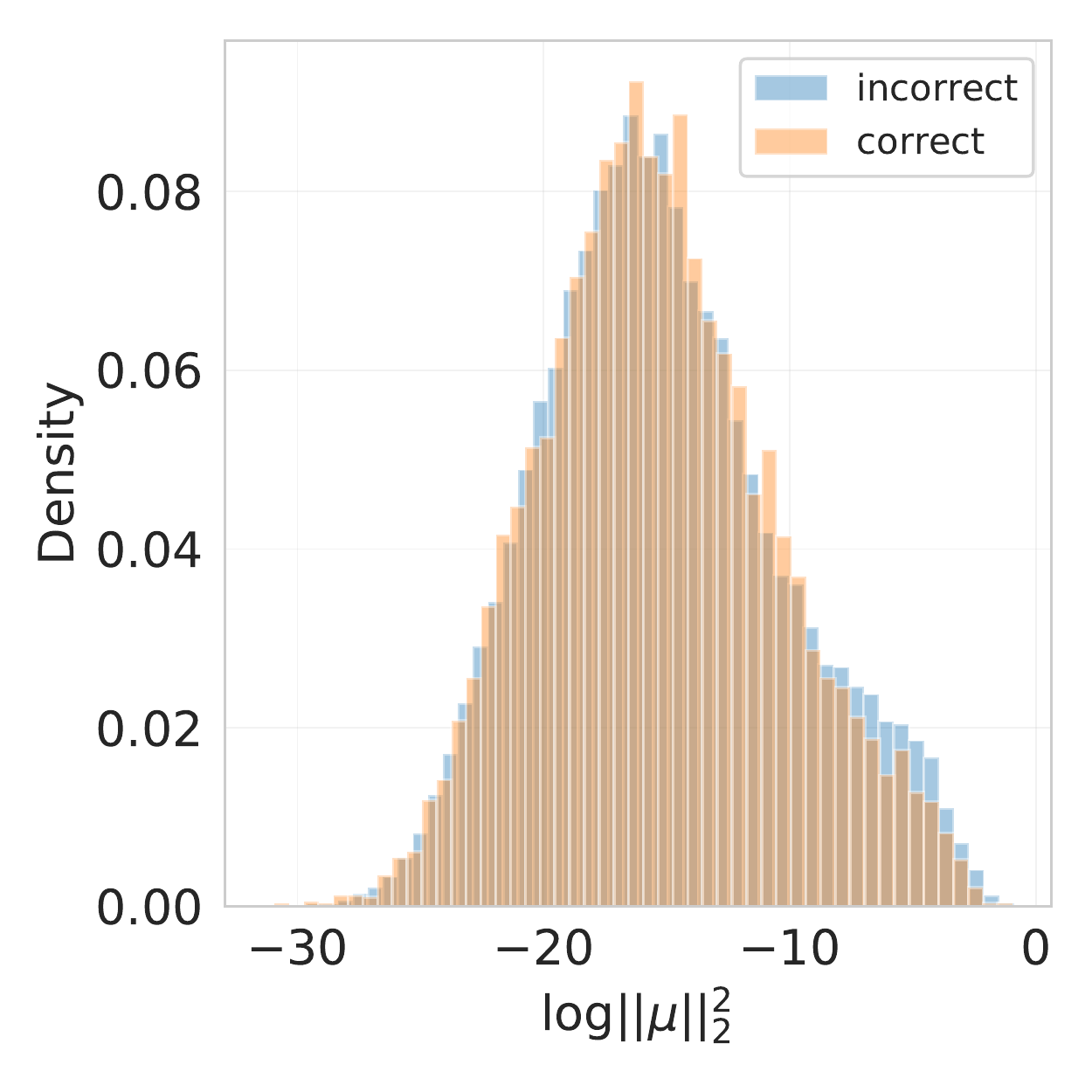}
    \caption{MNIST\\80\% uniform noise}
    \end{subfigure}%
    ~
    \begin{subfigure}{0.242\textwidth}
    \includegraphics[width=\textwidth]{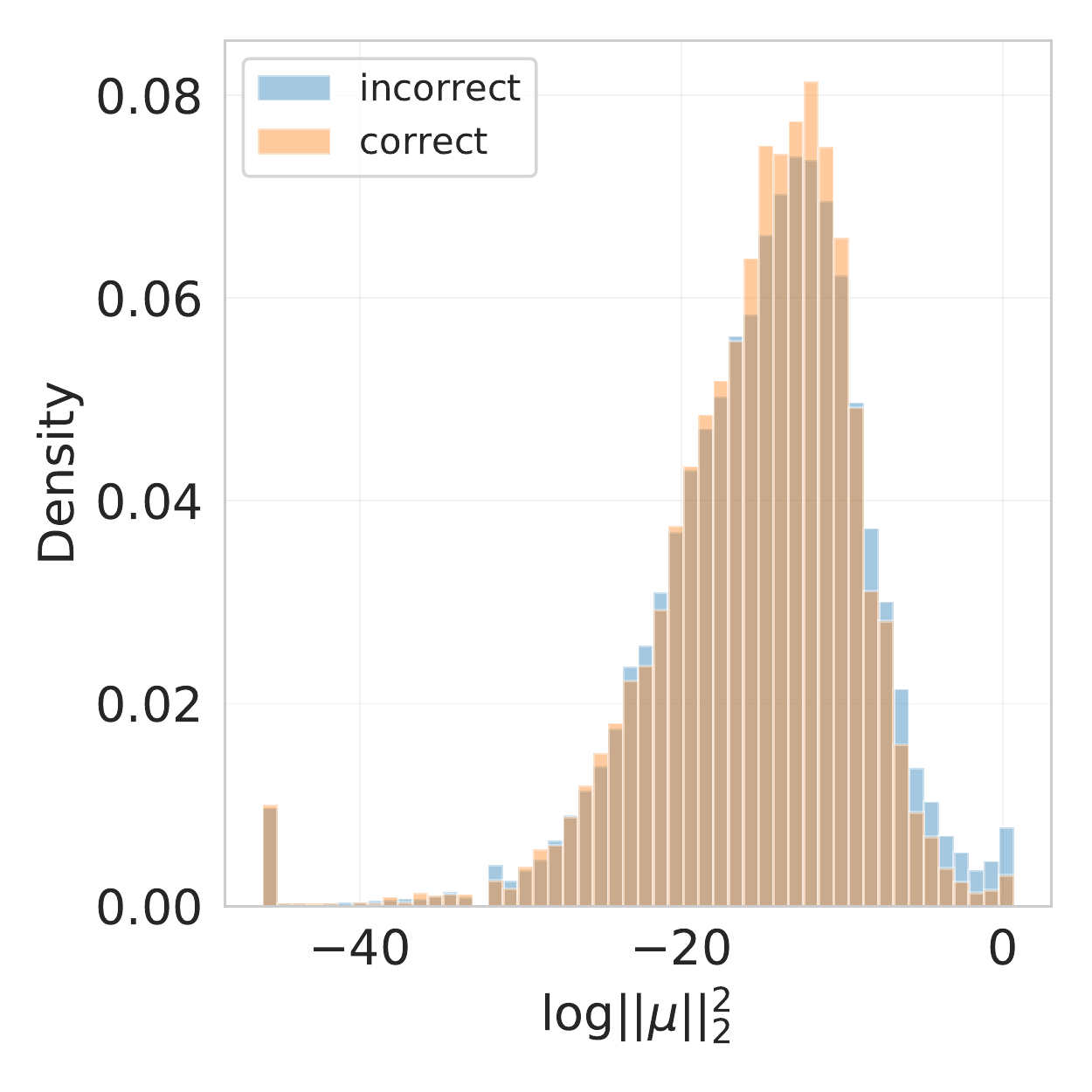}
    \caption{CIFAR-10\\40\% uniform noise}
    \end{subfigure}%
    ~
    \begin{subfigure}{0.242\textwidth}
    \includegraphics[width=\textwidth]{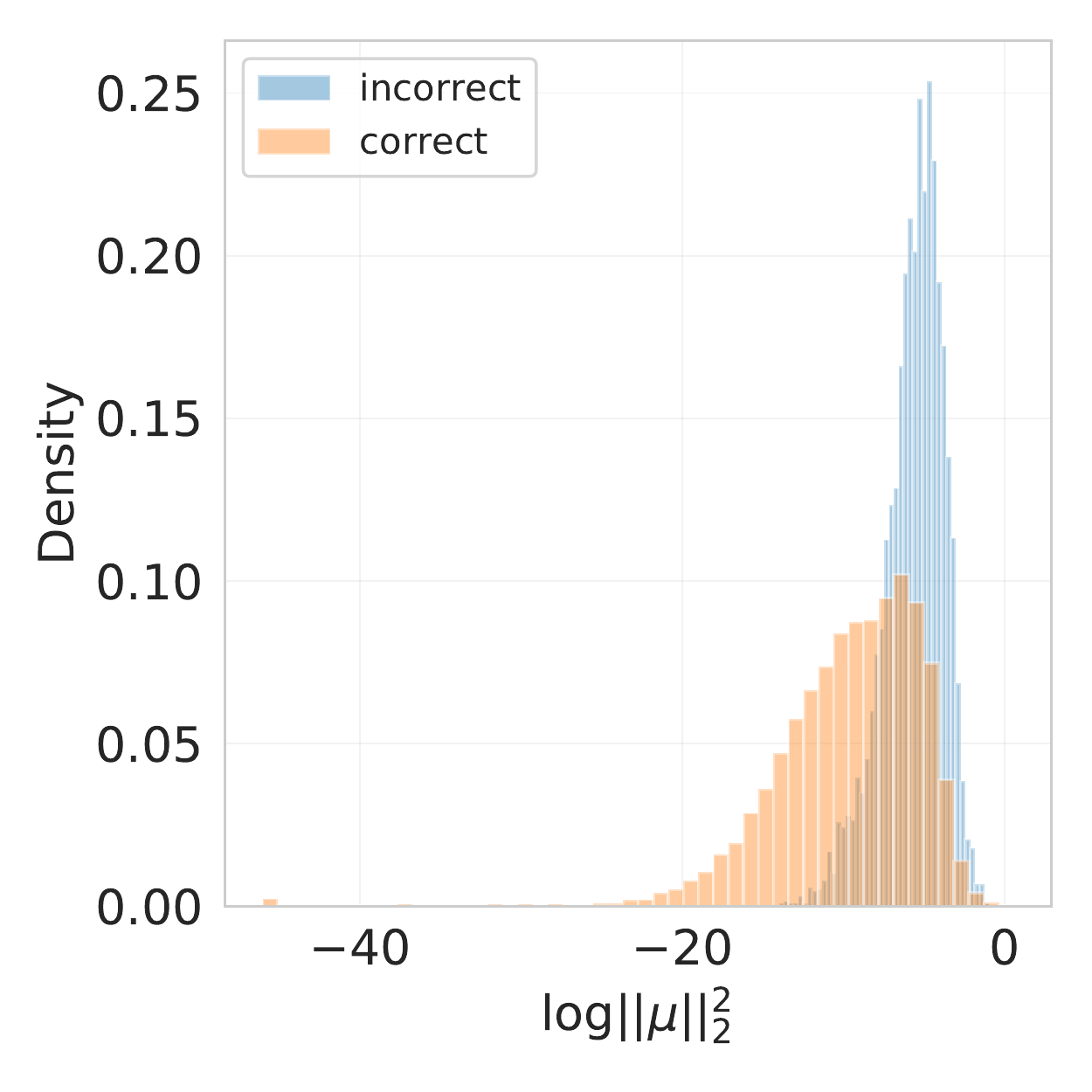}
    \caption{CIFAR-10\\40\% pair noise}
    \end{subfigure}%
    ~
    \begin{subfigure}{0.242\textwidth}
    \includegraphics[width=\textwidth]{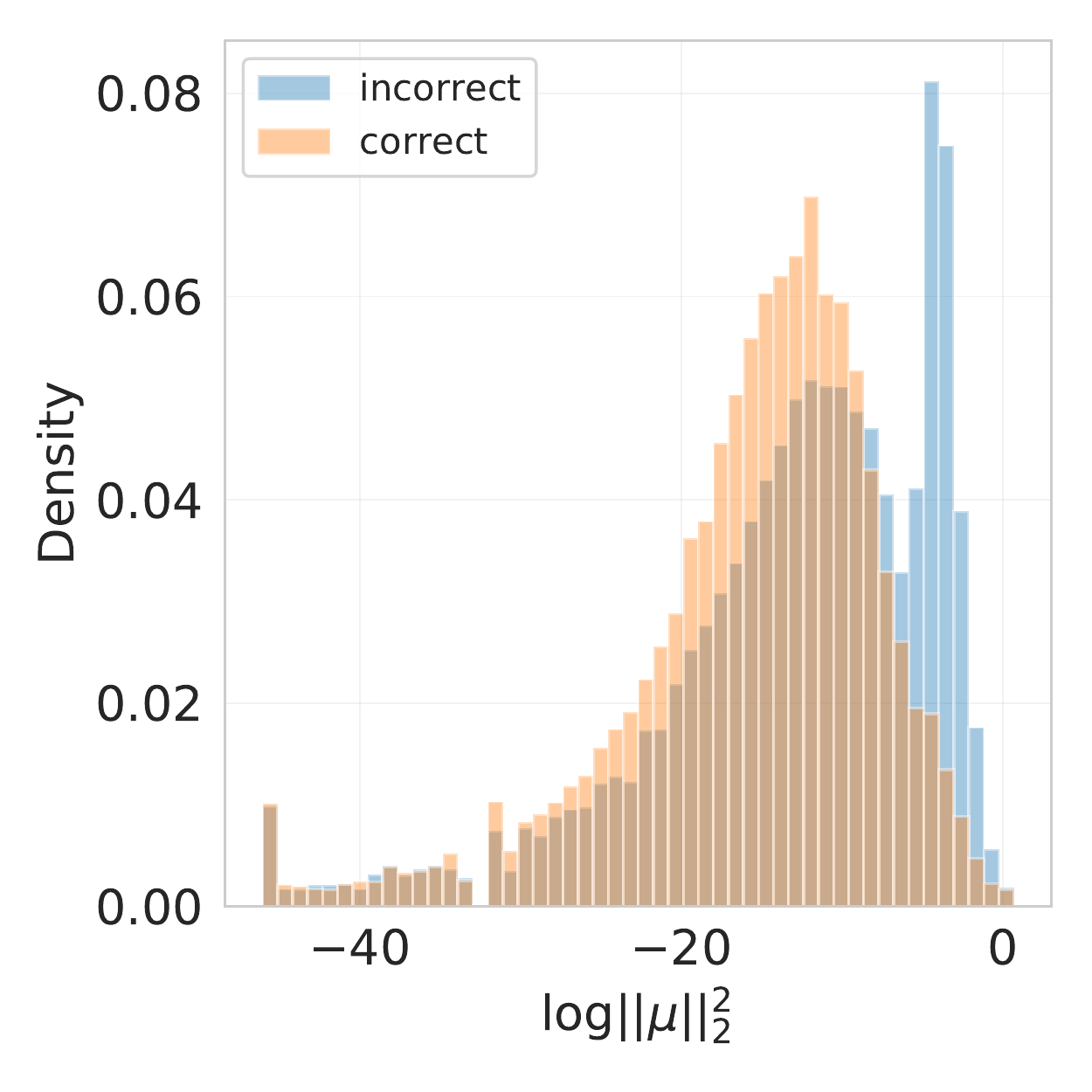}
    \caption{CIFAR-100\\40\% uniform noise}
    \end{subfigure}
    \caption{Histograms of the norm of predicted gradients for examples with correct and incorrect labels. The gradient predictions are done using the best instances of LIMIT.}
    \label{fig:grad-norm-hist}
\end{figure}

\begin{figure}[t]
    \captionsetup[subfigure]{justification=centering}
    \centering
    \begin{subfigure}{0.242\textwidth}
    \centering
    \includegraphics[width=\textwidth]{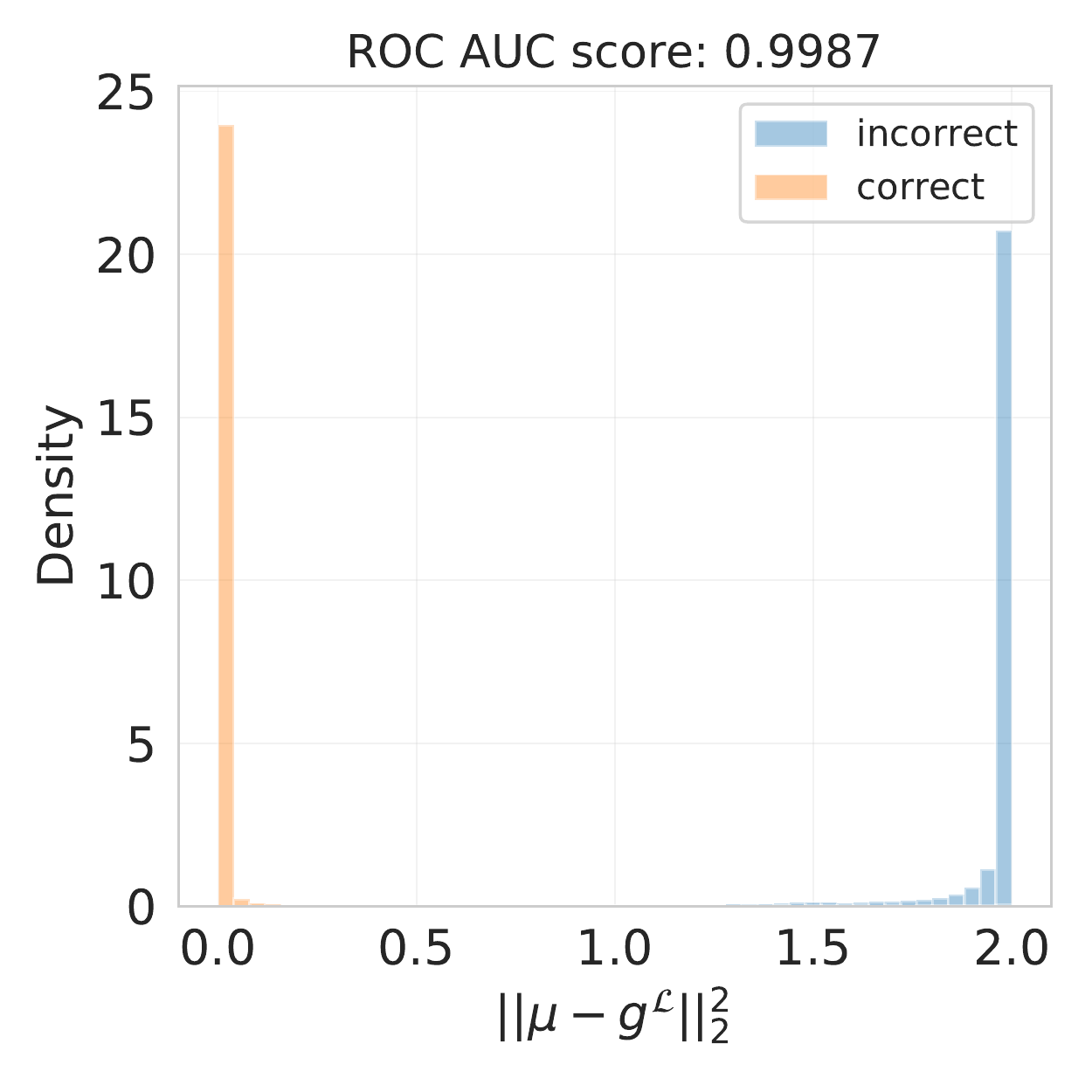}
    \caption{MNIST\\80\% uniform noise}
    \end{subfigure}%
    ~
    \begin{subfigure}{0.242\textwidth}
    \includegraphics[width=\textwidth]{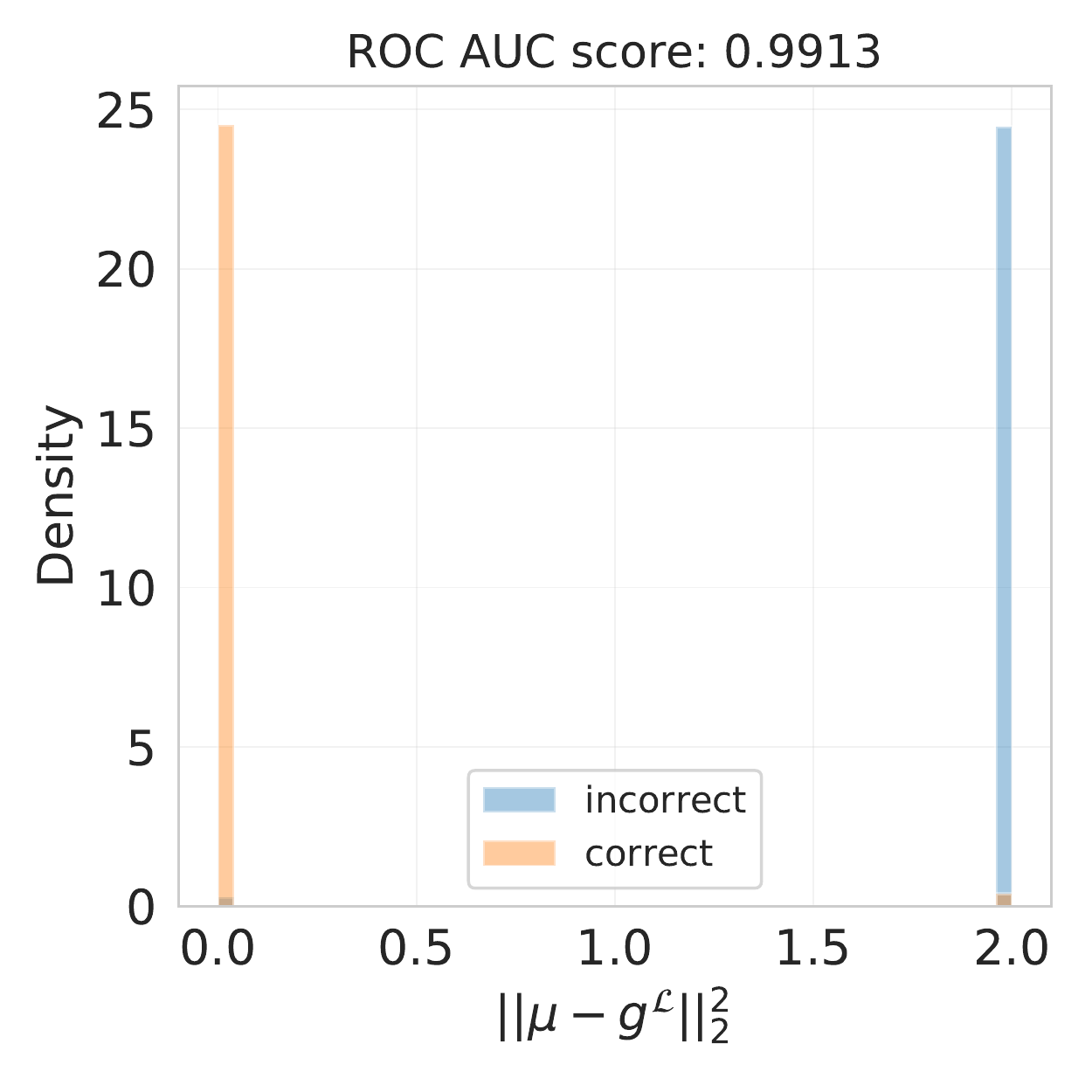}
    \caption{CIFAR-10\\40\% uniform noise}
    \end{subfigure}%
    ~
    \begin{subfigure}{0.242\textwidth}
    \includegraphics[width=\textwidth]{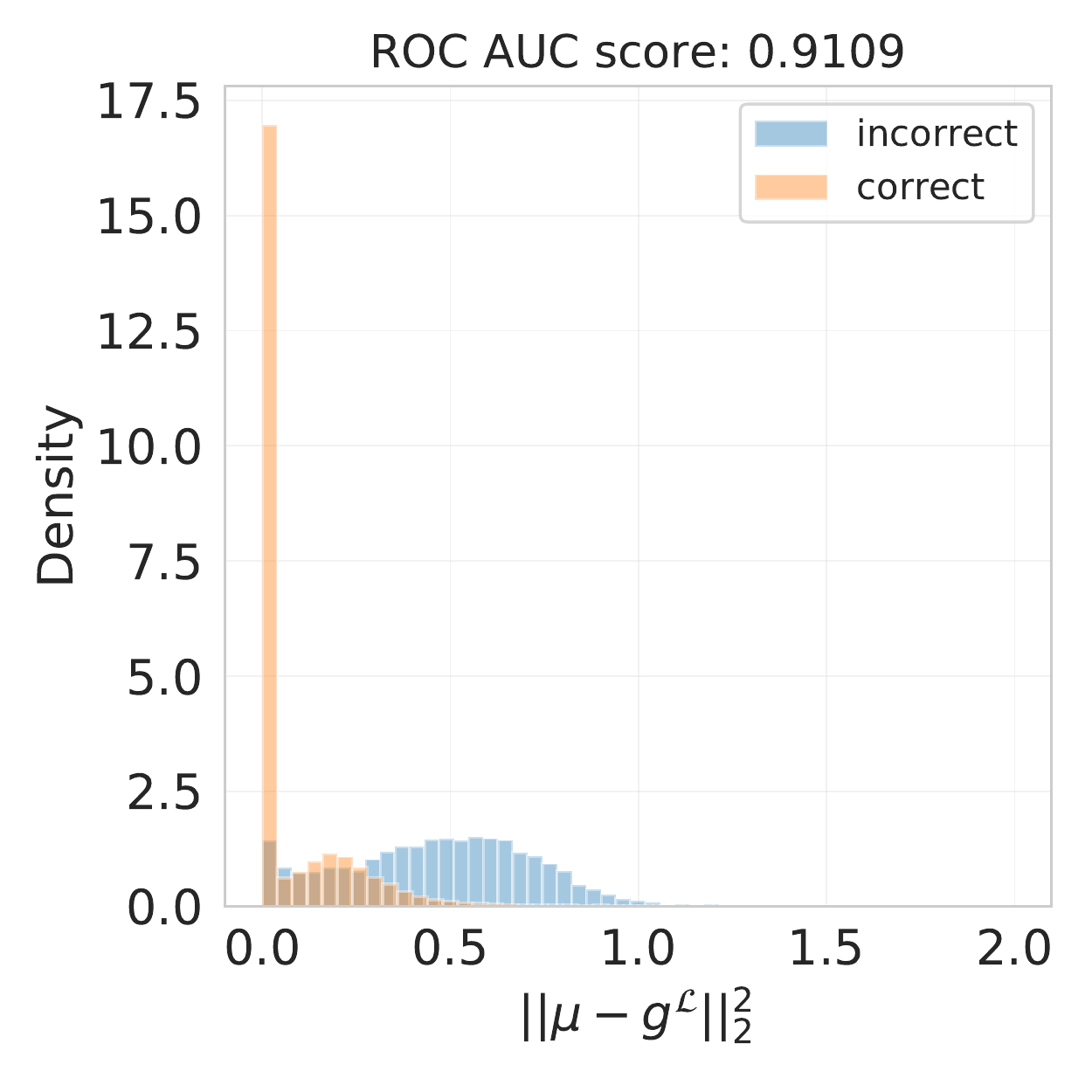}
    \caption{CIFAR-10\\40\% pair noise}
    \end{subfigure}%
    ~
    \begin{subfigure}{0.242\textwidth}
    \includegraphics[width=\textwidth]{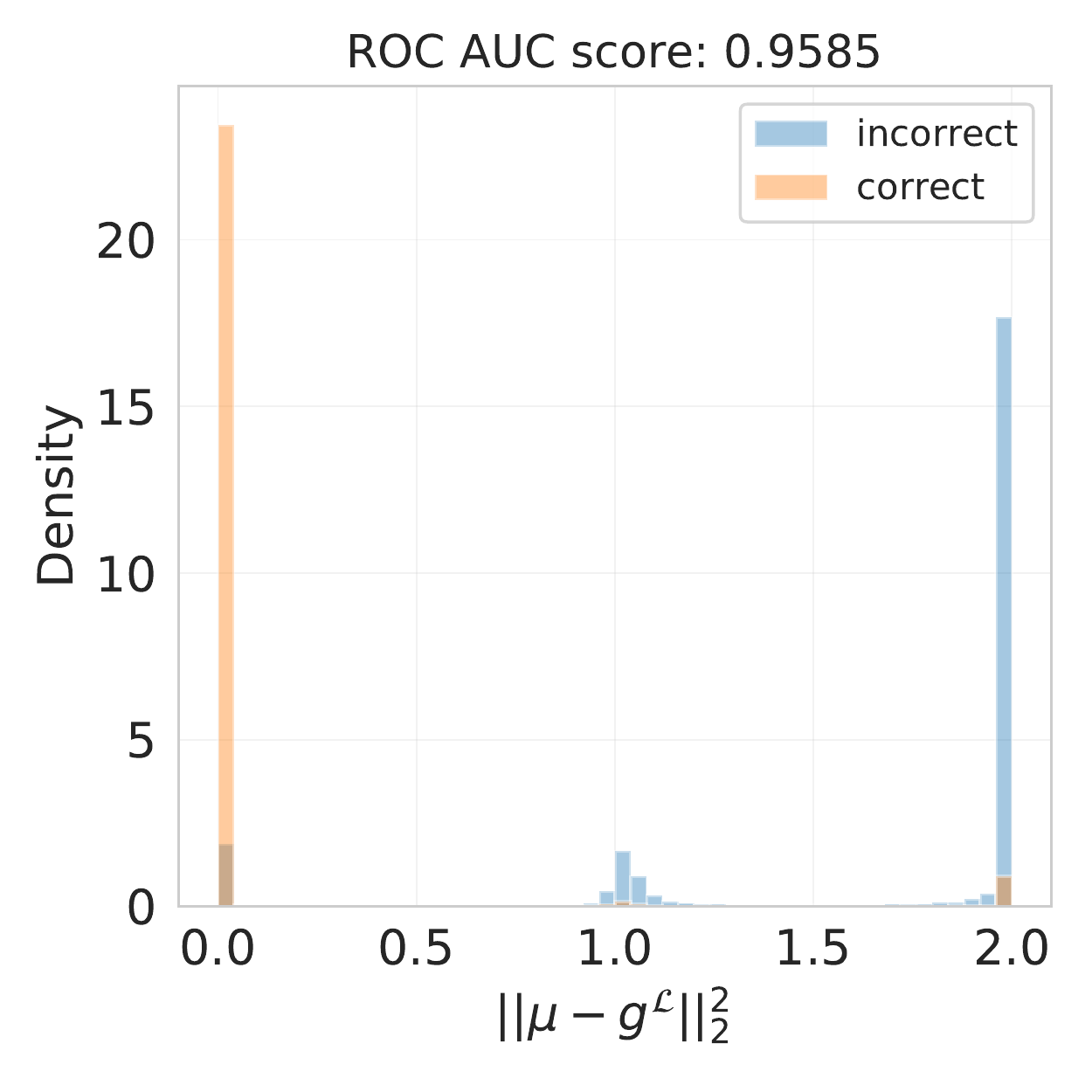}
    \caption{CIFAR-100\\40\% uniform noise}
    \end{subfigure}
    \caption{Histograms of the distance between predicted and actual gradient for examples with correct and incorrect labels. The gradient predictions are done using the best instances of LIMIT.}
    \label{fig:grad-diff-hist}
\end{figure}

\begin{table*}[t]
\small
\begin{center}
\begin{tabular}{lcccccc}
\toprule
\multirow{2}{*}{Method} & \multicolumn{3}{c}{$p=0.0$} & \multicolumn{3}{c}{$p=0.5$}\\
\cmidrule(lr){2-4}
\cmidrule(lr){5-7}
& $n=10^3$ & $n=10^4$ & All & $n=10^3$ & $n=10^4$ & All\\
\midrule
CE                             & 94.3 $\pm$  0.5 & 98.4 $\pm$  0.2 & \textbf{99.2 $\pm$  0.0} & 71.8 $\pm$  4.3 & 93.1 $\pm$  0.6 & 97.2 $\pm$  0.2\\
CE + GN         & 89.5 $\pm$  0.8 & 95.4 $\pm$  0.5 & 97.1 $\pm$  0.5 & 70.5 $\pm$  3.5 & 92.3 $\pm$  0.7 & 97.4 $\pm$  0.5\\
CE + LN          & 90.0 $\pm$  0.5 & 95.3 $\pm$  0.6 & 96.7 $\pm$  0.7 & 66.8 $\pm$  1.3 & 92.0 $\pm$  1.5 & 97.6 $\pm$  0.1\\
MAE                            & 94.6 $\pm$  0.5 & 98.3 $\pm$  0.2 & \textbf{99.1 $\pm$  0.1} & 75.6 $\pm$  5.0 & 95.7 $\pm$  0.5 & 98.1 $\pm$  0.1\\
FW                             & 93.6 $\pm$  0.6 & 98.4 $\pm$  0.1 & \textbf{99.2 $\pm$  0.1} & 64.3 $\pm$  9.1 & 91.6 $\pm$  2.0 & 97.3 $\pm$  0.3\\
DMI                            & 94.5 $\pm$  0.5 & 98.5 $\pm$  0.1 & \textbf{99.2 $\pm$  0.0} & 79.8 $\pm$  2.9 & 95.7 $\pm$  0.3 & 98.3 $\pm$  0.1\\
Soft reg. (\ref{eq:penalize-final})                       & \textbf{95.7 $\pm$  0.2} & 98.4 $\pm$  0.1 & \textbf{99.2 $\pm$  0.0} & 76.4 $\pm$  2.4 & 95.7 $\pm$  0.0 & 98.2 $\pm$  0.1\\
LIMIT$_\mathcal{G}$ + S        & \textbf{95.6 $\pm$  0.3} & \textbf{98.6 $\pm$  0.1} & \textbf{99.3 $\pm$  0.0} & 82.8 $\pm$  4.6 & 97.0 $\pm$  0.1 & 98.7 $\pm$  0.1\\
LIMIT$_\mathcal{L}$ + S         & 94.8 $\pm$  0.3 & \textbf{98.6 $\pm$  0.2} & \textbf{99.3 $\pm$  0.0} & \textbf{88.7 $\pm$  3.8} & \textbf{97.6 $\pm$  0.1} & \textbf{98.9 $\pm$  0.0}\\
LIMIT$_\mathcal{G}$ - S               & \textbf{95.7 $\pm$  0.2} & \textbf{98.7 $\pm$  0.1} & \textbf{99.3 $\pm$  0.1} & 83.3 $\pm$  2.3 & 97.1 $\pm$  0.2 & 98.6 $\pm$  0.1\\
LIMIT$_\mathcal{L}$ - S                & 95.0 $\pm$  0.2 & \textbf{98.7 $\pm$  0.1} & \textbf{99.3 $\pm$  0.1} & \textbf{88.2 $\pm$  2.9} & \textbf{97.7 $\pm$  0.1} & \textbf{99.0 $\pm$  0.1}\\
\bottomrule
\end{tabular}
\end{center}
\caption{Test accuracy comparison on multiple versions of MNIST corrupted with uniform label noise.}
\label{tab:mnist-with-error-bars-1}
\end{table*}

\begin{table*}[!t]
\small
\begin{center}
\begin{tabular}{lcccccc}
\toprule
\multirow{2}{*}{Method} & \multicolumn{3}{c}{$p=0.8$} & \multicolumn{3}{c}{$p=0.89$}\\
\cmidrule(lr){2-4}
\cmidrule(lr){5-7}
& $n=10^3$ & $n=10^4$ & All & $n=10^3$ & $n=10^4$ & All\\
\midrule
CE                             & 27.0 $\pm$  3.8 & 69.9 $\pm$  2.6 & 87.2 $\pm$  1.0 & 10.3 $\pm$  1.6 & 13.4 $\pm$  3.3 & 13.2 $\pm$  1.8\\
CE + GN         & 25.9 $\pm$  4.6 & 51.9 $\pm$ 10.5 & 85.3 $\pm$  8.3 & 10.4 $\pm$  4.5 & 10.2 $\pm$  3.3 & 11.1 $\pm$  0.4\\
CE + LN          & 30.2 $\pm$  4.8 & 53.1 $\pm$  6.4 & 74.5 $\pm$ 19.1 & 11.9 $\pm$  3.9 &  8.8 $\pm$  5.4 & 14.1 $\pm$  4.3\\
MAE                            & 25.1 $\pm$  3.3 & 74.6 $\pm$  2.7 & 93.2 $\pm$  1.1 & 10.9 $\pm$  1.4 & 12.1 $\pm$  3.9 & 17.6 $\pm$  8.1\\
FW                             & 19.0 $\pm$  4.1 & 61.2 $\pm$  5.0 & 89.1 $\pm$  2.1 &  8.7 $\pm$  2.8 & 11.4 $\pm$  1.4 & 12.3 $\pm$  1.8\\
DMI                            & 30.3 $\pm$  5.1 & 79.0 $\pm$  1.5 & 88.8 $\pm$  0.9 & 10.5 $\pm$  1.2 & 14.1 $\pm$  5.1 & 12.5 $\pm$  1.5\\
Soft reg. (\ref{eq:penalize-final})                       & 28.8 $\pm$  2.2 & 67.0 $\pm$  1.9 & 89.3 $\pm$  0.6 & 10.3 $\pm$  1.6 & 10.5 $\pm$  0.8 & 12.7 $\pm$  2.6\\
LIMIT$_\mathcal{G}$ + S        & \textbf{35.9 $\pm$  6.3} & 80.6 $\pm$  2.8 & 93.4 $\pm$  0.5 & 10.0 $\pm$  1.0 & 14.3 $\pm$  5.4 & 13.1 $\pm$  4.3\\
LIMIT$_\mathcal{L}$ + S         & \textbf{35.6 $\pm$  3.2} & \textbf{93.3 $\pm$  0.3} & \textbf{97.6 $\pm$  0.3} & 10.1 $\pm$  0.7 & 12.5 $\pm$  2.1 & \textbf{28.3 $\pm$  8.1}\\
LIMIT$_\mathcal{G}$ - S               & \textbf{37.1 $\pm$  5.4} & 82.0 $\pm$  1.5 & 94.7 $\pm$  0.6 &  9.9 $\pm$  1.0 & 12.6 $\pm$  0.3 & 16.0 $\pm$  5.9\\
LIMIT$_\mathcal{L}$ - S                & \textbf{35.9 $\pm$  4.3} & \textbf{93.9 $\pm$  0.8} & \textbf{97.7 $\pm$  0.2} & 11.1 $\pm$  0.7 & 11.8 $\pm$  1.0 & \textbf{28.6 $\pm$  4.0}\\
\bottomrule
\end{tabular}
\end{center}
\caption{Test accuracy comparison on multiple versions of MNIST corrupted with uniform label noise.}
\label{tab:mnist-with-error-bars-2}
\end{table*}
\begin{table}[t]
    \small
    \centering
    \begin{tabular}{lccccc}
    \toprule
    Method & $p=0.0$ & $p=0.2$ & $p=0.4$  & $p=0.6$  & $p=0.8$\\
    \midrule
    CE                             & 92.7 $\pm$  0.3 & 85.2 $\pm$  0.4 & 81.0 $\pm$  0.4 & 69.0 $\pm$  0.5 & 38.8 $\pm$  0.5\\
    MAE                            & 84.4 $\pm$  0.4 & 85.4 $\pm$  0.4 & 64.6 $\pm$  0.5 & 15.4 $\pm$  0.4 & 12.0 $\pm$  0.3\\
    FW                             & 92.9 $\pm$  0.3 & 86.2 $\pm$  0.3 & 81.4 $\pm$  0.4 & 69.7 $\pm$  0.5 & 34.4 $\pm$  0.5\\
    DMI                            & 93.0 $\pm$  0.3 & 88.3 $\pm$  0.3 & 85.0 $\pm$  0.3 & 72.5 $\pm$  0.4 & 38.9 $\pm$  0.5\\
    LIMIT$_\mathcal{G}$               & \textbf{93.5 $\pm$  0.2} & 90.7 $\pm$  0.3 & 86.6 $\pm$  0.3 & 73.7 $\pm$  0.4 & 38.7 $\pm$  0.5\\
    LIMIT$_\mathcal{L}$                & 93.1 $\pm$  0.3 & 91.5 $\pm$  0.3 & 88.2 $\pm$  0.3 & 75.7 $\pm$  0.4 & 35.8 $\pm$  0.5\\
    LIMIT$_\mathcal{G}$ + init.        & \textbf{93.3 $\pm$  0.3} & \textbf{92.4 $\pm$  0.3} & \textbf{90.3 $\pm$  0.3} & 81.9 $\pm$  0.4 & \textbf{44.1 $\pm$  0.5}\\
    LIMIT$_\mathcal{L}$ + init.         & \textbf{93.3 $\pm$  0.2} & \textbf{92.2 $\pm$  0.3} & \textbf{90.2 $\pm$  0.3} & \textbf{82.9 $\pm$  0.4} & \textbf{44.3 $\pm$  0.5}\\
    \bottomrule
    \end{tabular}
    \caption{Test accuracy comparison on CIFAR-10, corrupted with uniform label noise. The error bars are computed by bootstrapping the test set 1000 times.}
    \label{tab:cifar10_error_with_error_bars}
\end{table}

\begin{table}[t]
    \small
    \centering
    \begin{tabular}{lccccc}
    \toprule
    Method & $p=0.0$ & $p=0.1$ & $p=0.2$  & $p=0.3$  & $p=0.4$\\
    \midrule
    CE                             & 92.7 $\pm$  0.3 & 90.0 $\pm$  0.3 & 88.1 $\pm$  0.3 & 87.2 $\pm$  0.3 & 81.8 $\pm$  0.4\\
    MAE                            & 84.4 $\pm$  0.4 & 88.6 $\pm$  0.3 & 83.2 $\pm$  0.4 & 72.1 $\pm$  0.4 & 61.1 $\pm$  0.5\\
    FW                             & 92.9 $\pm$  0.3 & 90.1 $\pm$  0.3 & 88.0 $\pm$  0.3 & 86.8 $\pm$  0.3 & 84.6 $\pm$  0.3\\
    DMI                            & 93.0 $\pm$  0.3 & 91.4 $\pm$  0.3 & 90.6 $\pm$  0.3 & 90.4 $\pm$  0.3 & \textbf{89.6 $\pm$  0.3}\\
    LIMIT$_\mathcal{G}$               & \textbf{93.5 $\pm$  0.2} & 92.8 $\pm$  0.3 & 91.3 $\pm$  0.3 & 89.2 $\pm$  0.3 & 86.0 $\pm$  0.3\\
    LIMIT$_\mathcal{L}$                & 93.1 $\pm$  0.3 & 91.9 $\pm$  0.3 & 91.1 $\pm$  0.3 & 88.8 $\pm$  0.3 & 84.2 $\pm$  0.4\\
    LIMIT$_\mathcal{G}$ + init.        & \textbf{93.3 $\pm$  0.3} & \textbf{93.3 $\pm$  0.3} & \textbf{92.9 $\pm$  0.3} & \textbf{90.8 $\pm$  0.3} & 88.3 $\pm$  0.3\\
    LIMIT$_\mathcal{L}$ + init.         & \textbf{93.3 $\pm$  0.2} & \textbf{93.0 $\pm$  0.2} & 92.3 $\pm$  0.3 & \textbf{91.1 $\pm$  0.3} & \textbf{90.0 $\pm$  0.3}\\
    \bottomrule
    \end{tabular}
    \caption{Test accuracy comparison on CIFAR-10, corrupted with pair noise, described in Sec.~\ref{subsec:cifar-pair}. The error bars are computed by bootstrapping the test set 1000 times.}
    \label{tab:cifar10_custom_with_error_bars}
\end{table}

\begin{figure}
    \centering
    \begin{subfigure}{0.49\textwidth}
    \includegraphics[width=\textwidth]{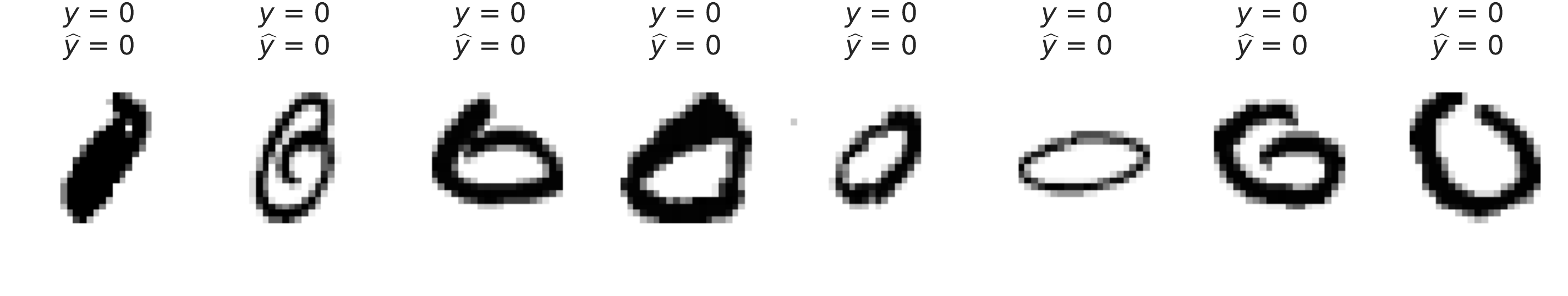}
    \includegraphics[width=\textwidth]{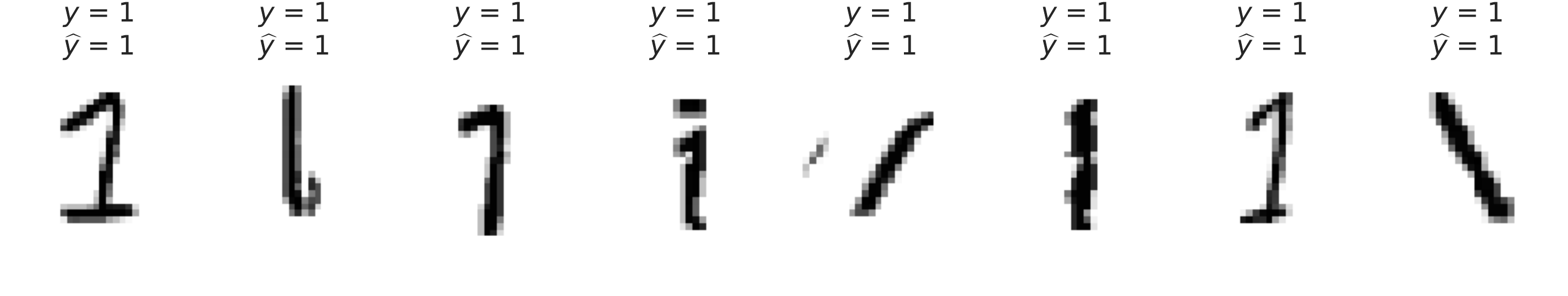}
    \includegraphics[width=\textwidth]{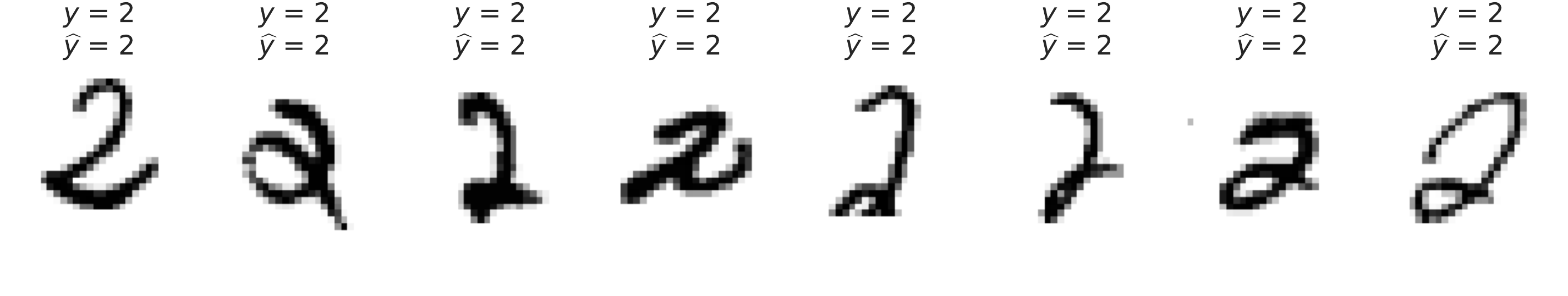}
    \includegraphics[width=\textwidth]{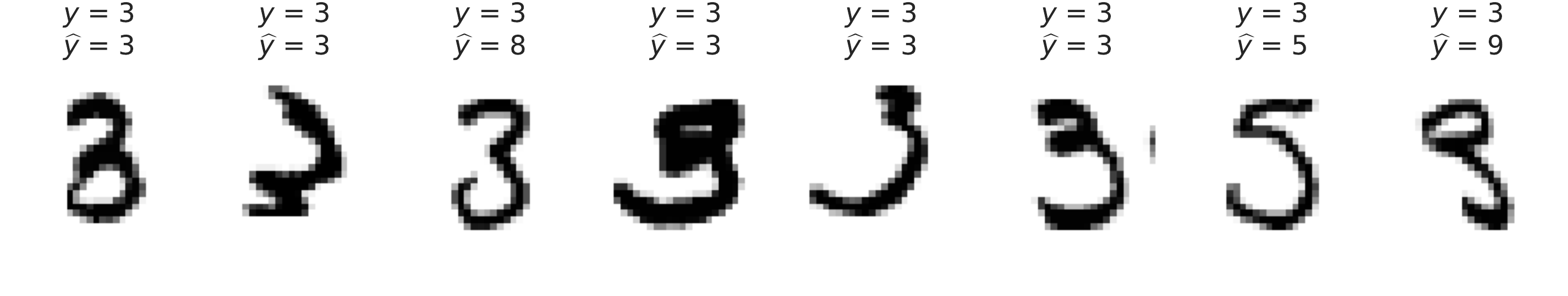}
    \includegraphics[width=\textwidth]{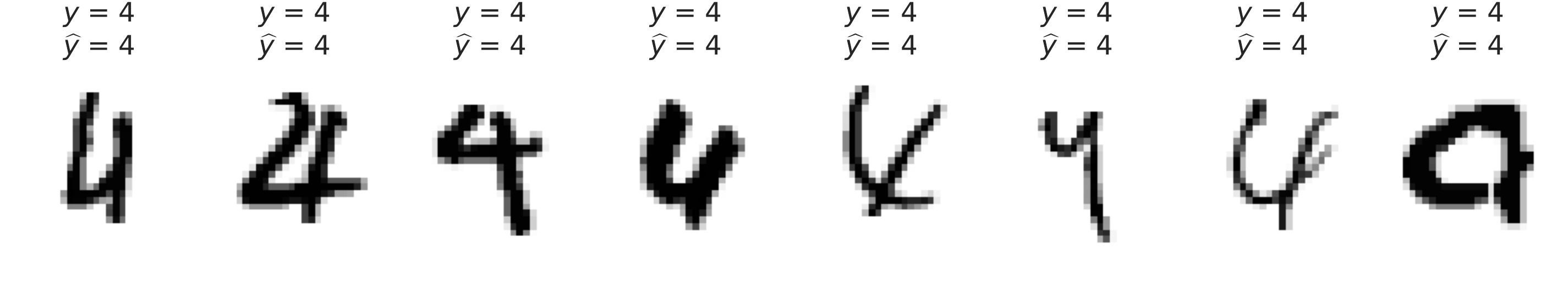}
    \includegraphics[width=\textwidth]{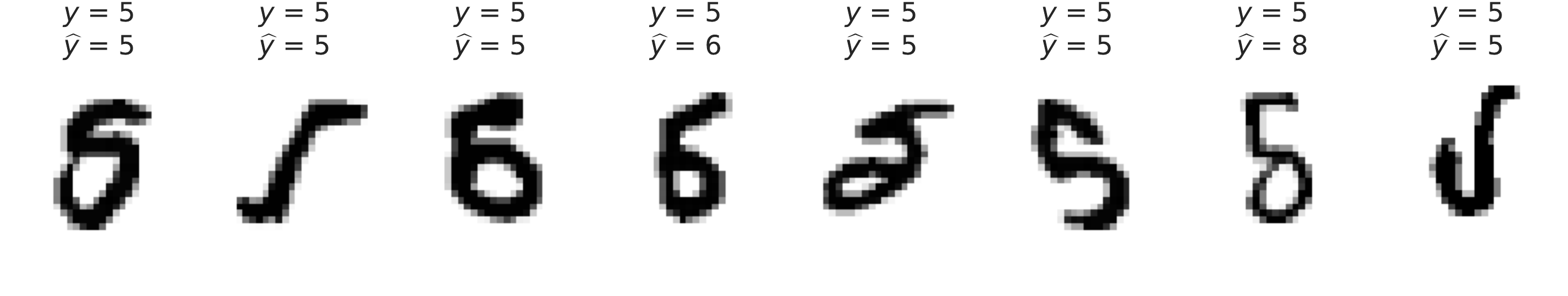}
    \includegraphics[width=\textwidth]{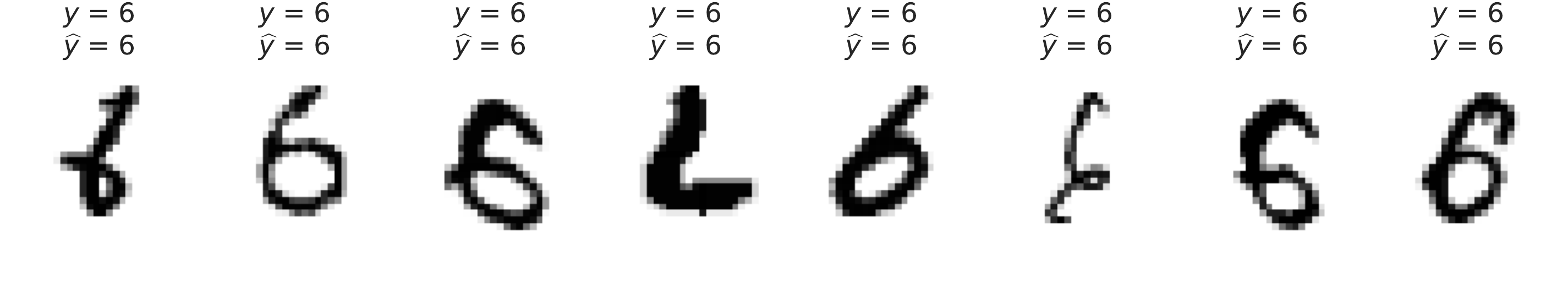}
    \includegraphics[width=\textwidth]{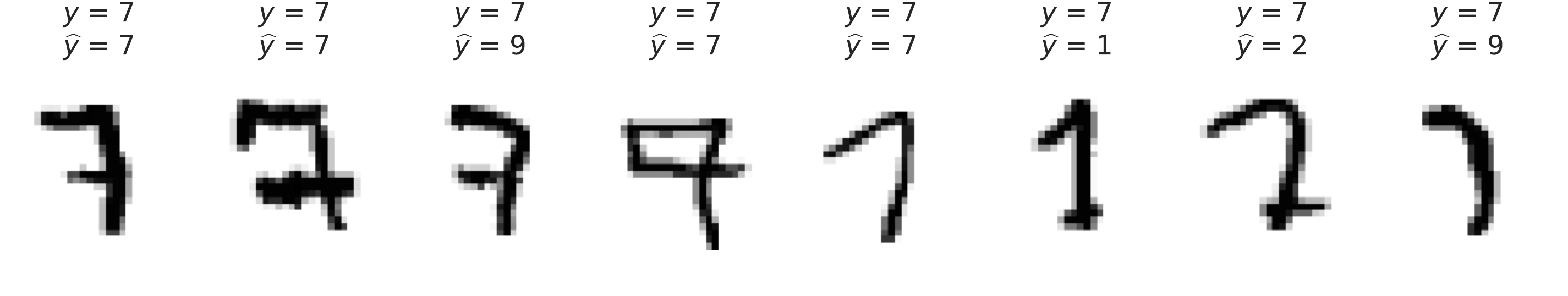}
    \includegraphics[width=\textwidth]{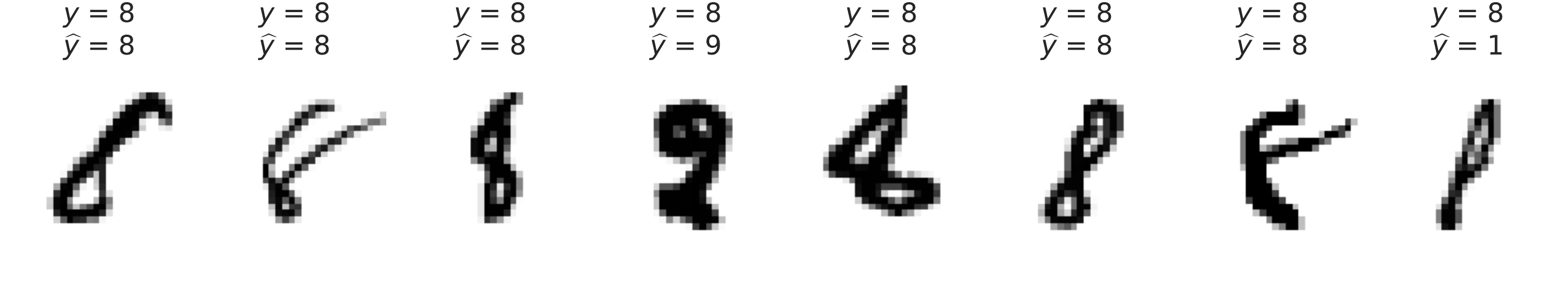}
    \includegraphics[width=\textwidth]{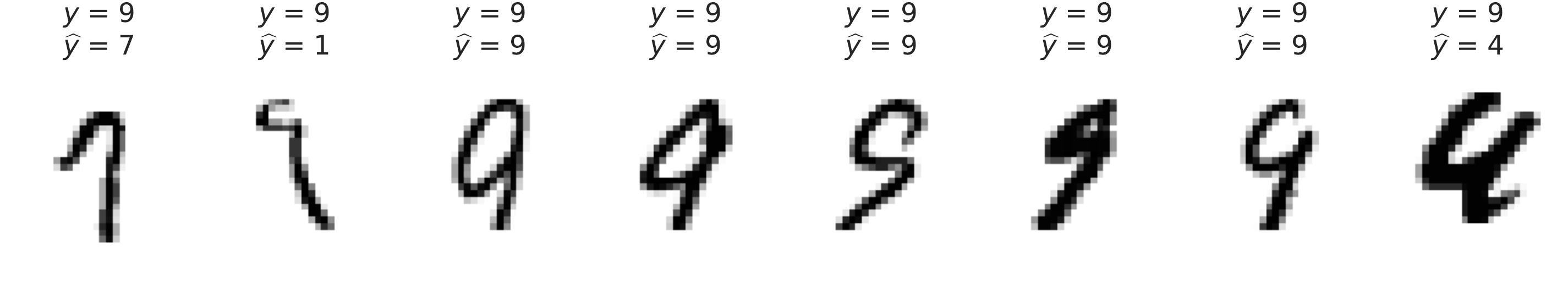}
    \caption{MNIST}
    \end{subfigure}%
    ~
    \begin{subfigure}{0.49\textwidth}
    \includegraphics[width=\textwidth]{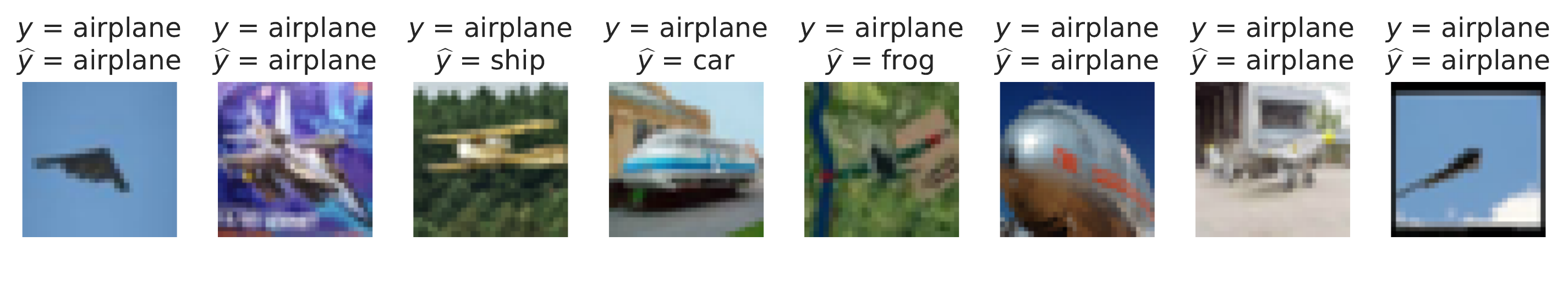}
    \includegraphics[width=\textwidth]{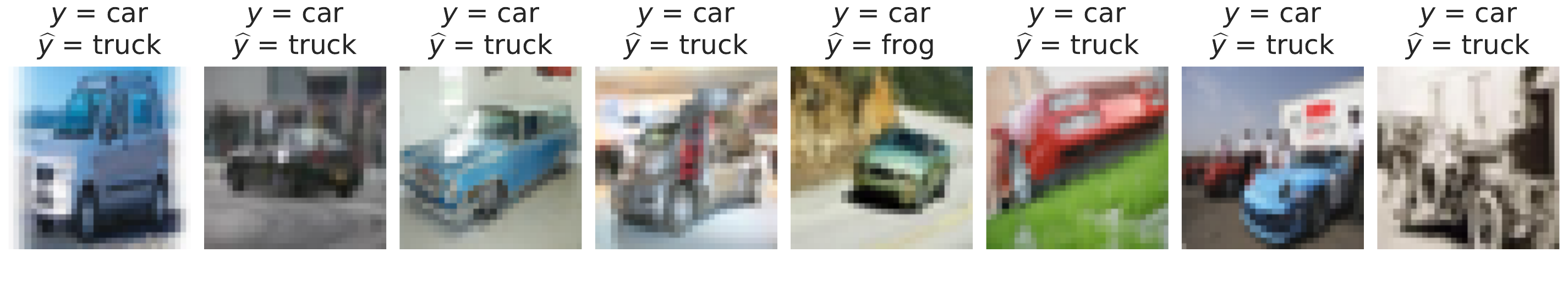}
    \includegraphics[width=\textwidth]{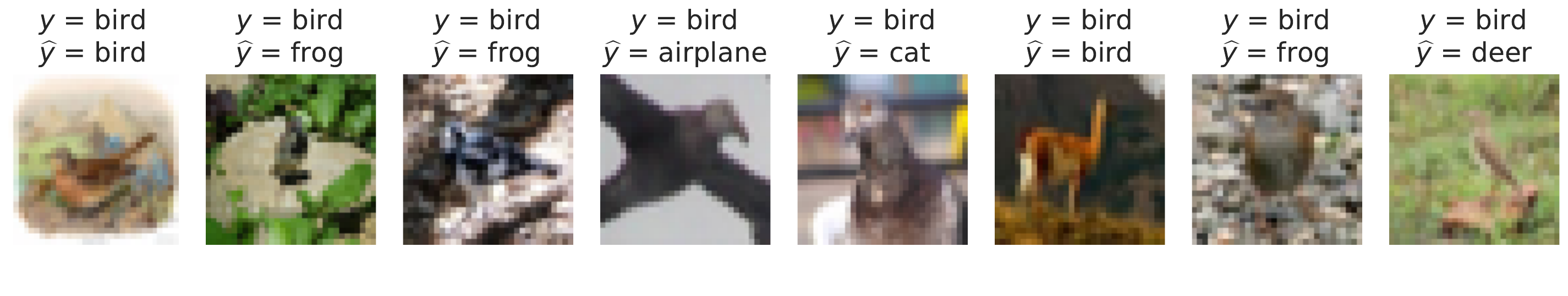}
    \includegraphics[width=\textwidth]{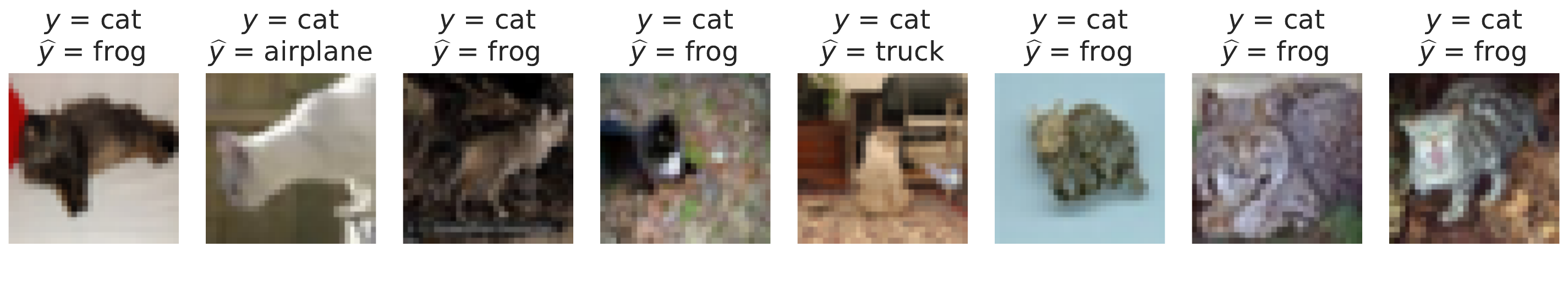}
    \includegraphics[width=\textwidth]{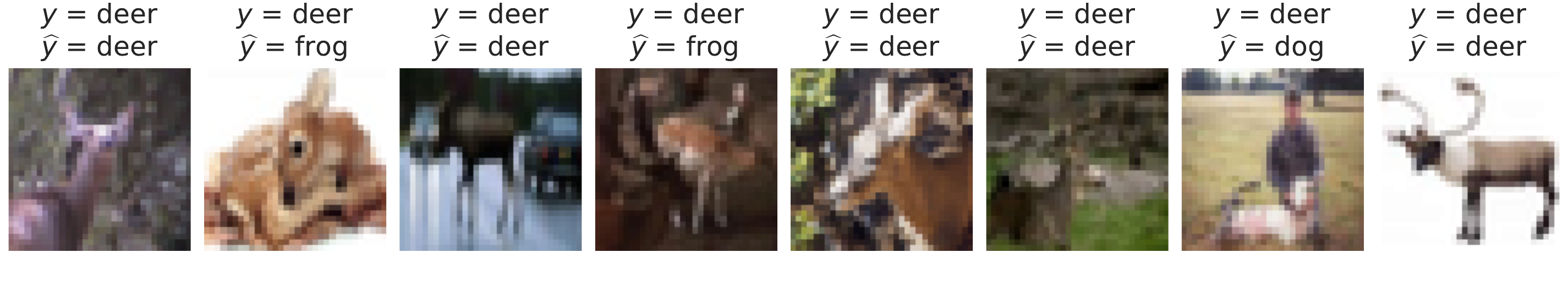}
    \includegraphics[width=\textwidth]{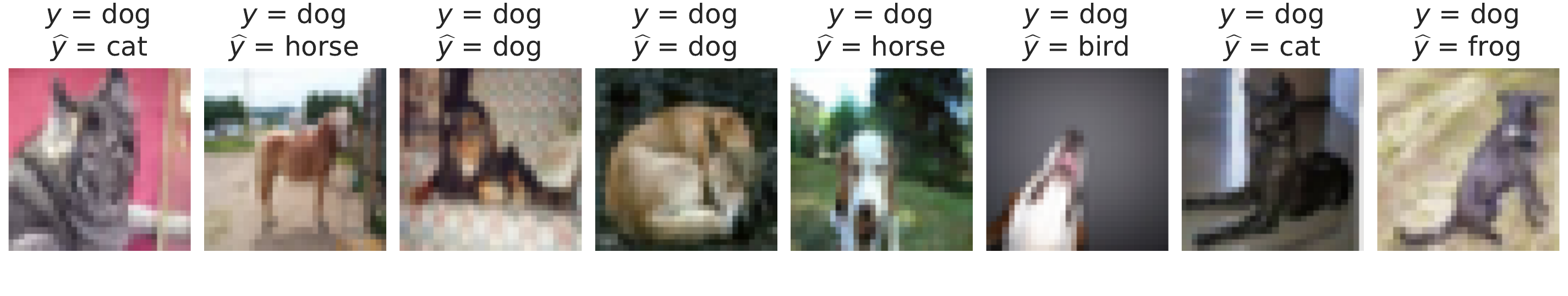}
    \includegraphics[width=\textwidth]{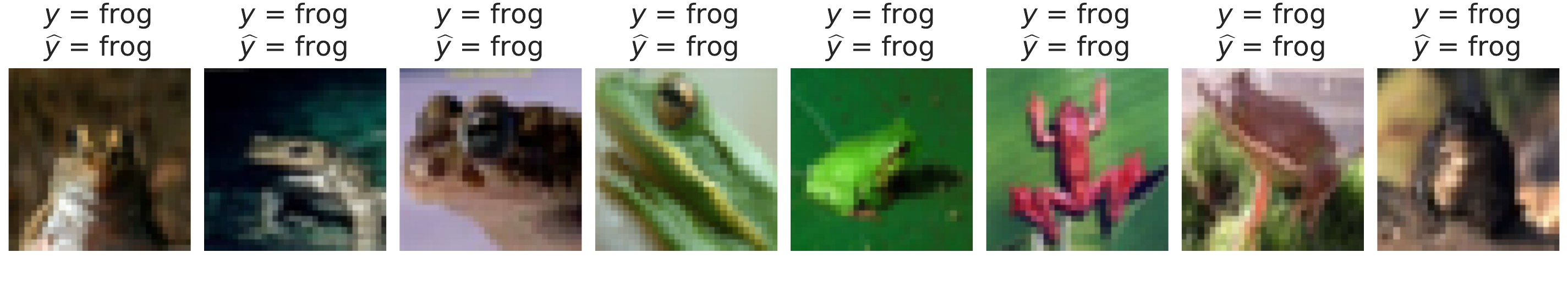}
    \includegraphics[width=\textwidth]{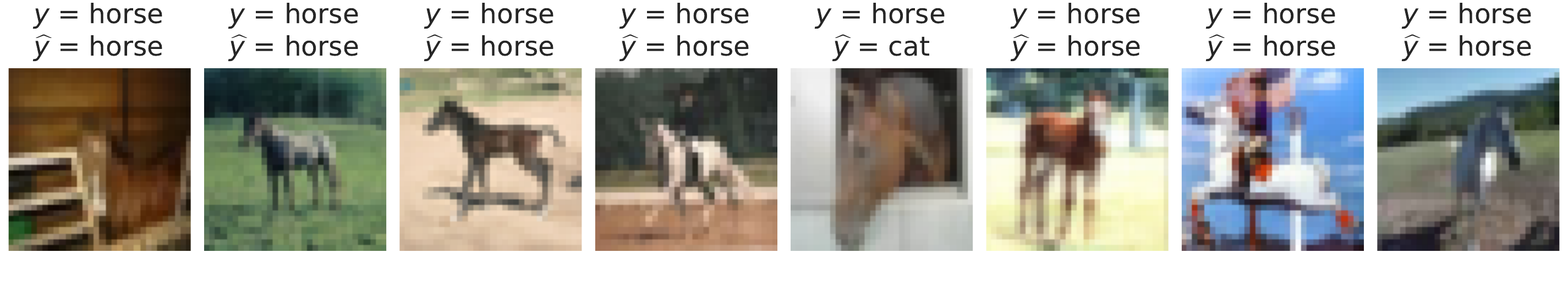}
    \includegraphics[width=\textwidth]{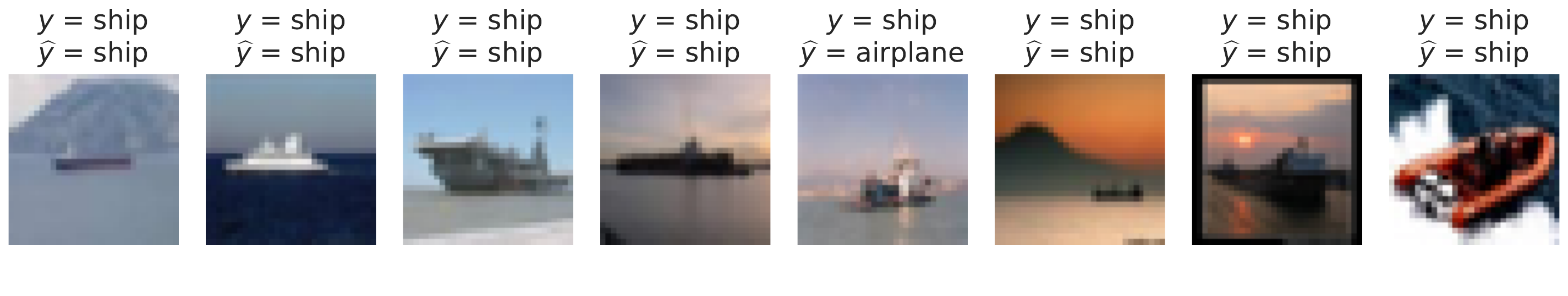}
    \includegraphics[width=\textwidth]{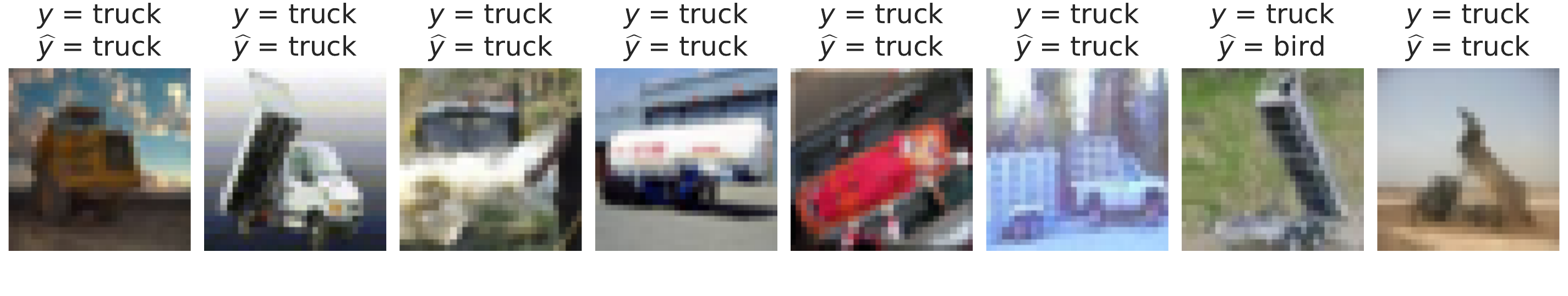}
    \caption{CIFAR-10}
    \end{subfigure}
    \caption{Most confusing 8 labels per class in the MNIST (on the left) and CIFAR-10 (on the right) datasets, according to the distance between predicted and cross-entropy gradients. The gradient predictions are done using the best instances of LIMIT.}
    \label{fig:mnist-cifar-confusing-more-examples}
\end{figure}

\begin{figure}
    \centering
    \begin{subfigure}{0.9\textwidth}
    \includegraphics[width=\textwidth]{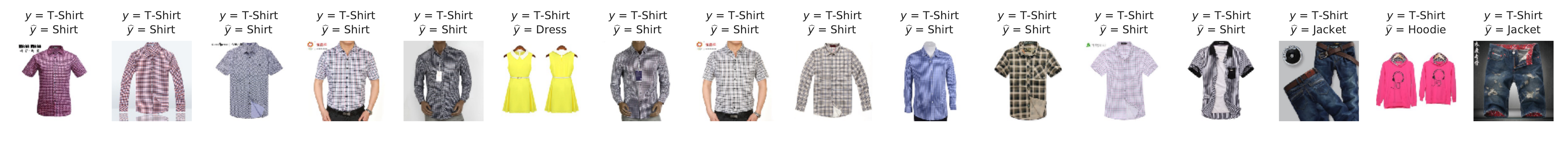}
    \end{subfigure}
    \begin{subfigure}{0.9\textwidth}
    \includegraphics[width=\textwidth]{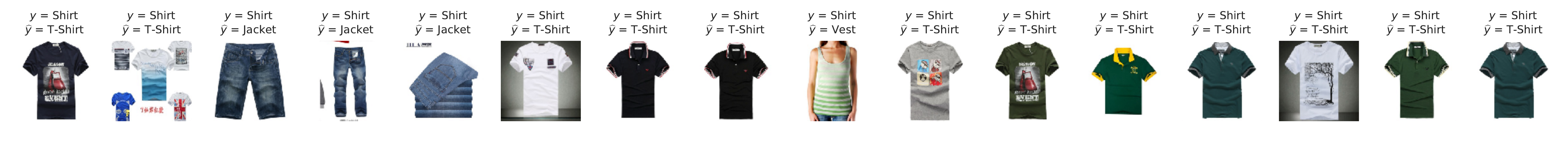}
    \end{subfigure}
    \begin{subfigure}{0.9\textwidth}
    \includegraphics[width=\textwidth]{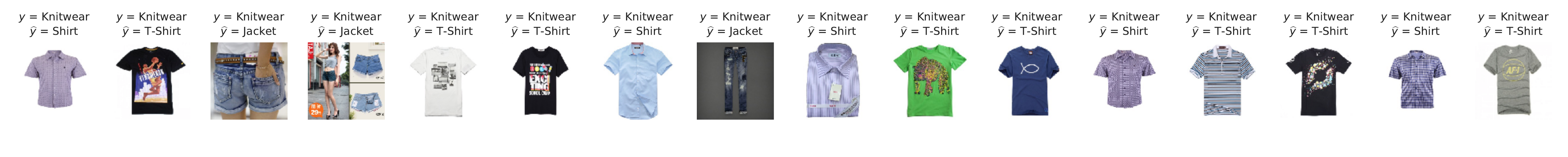}
    \end{subfigure}
    \begin{subfigure}{0.9\textwidth}
    \includegraphics[width=\textwidth]{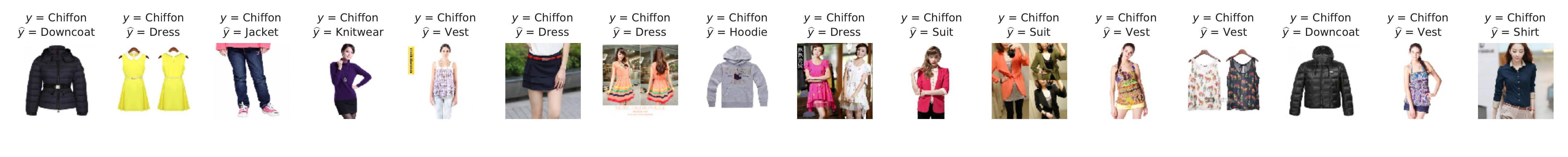}
    \end{subfigure}
    \begin{subfigure}{0.9\textwidth}
    \includegraphics[width=\textwidth]{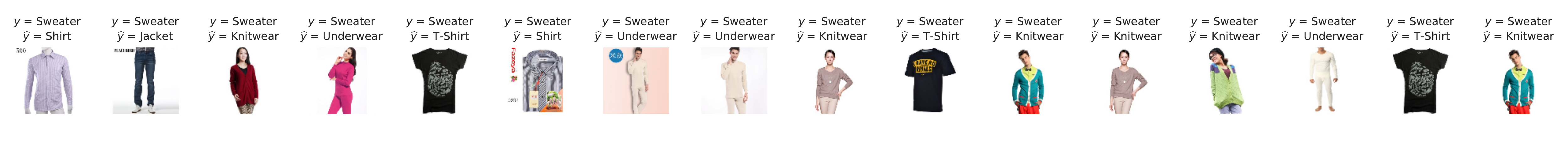}
    \end{subfigure}
    \begin{subfigure}{0.9\textwidth}
    \includegraphics[width=\textwidth]{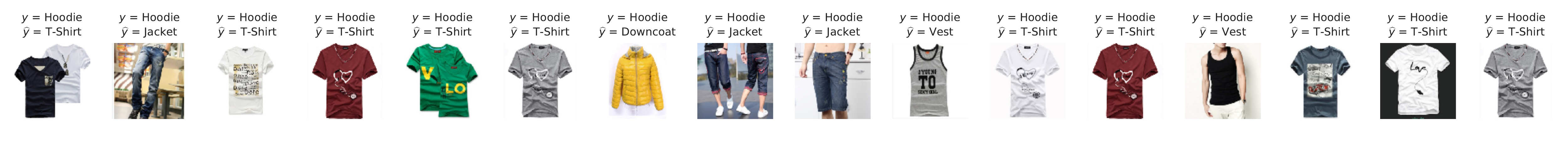}
    \end{subfigure}
    \begin{subfigure}{0.9\textwidth}
    \includegraphics[width=\textwidth]{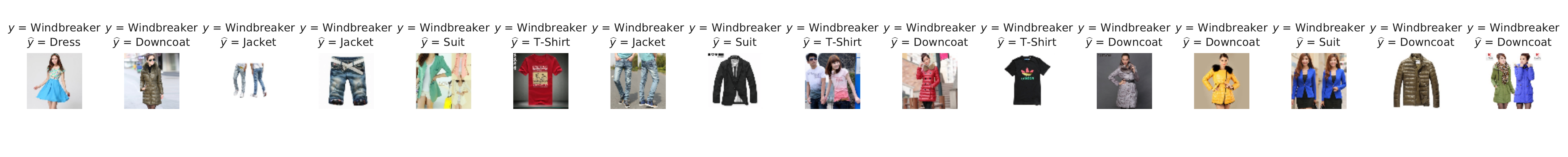}
    \end{subfigure}
    \begin{subfigure}{0.9\textwidth}
    \includegraphics[width=\textwidth]{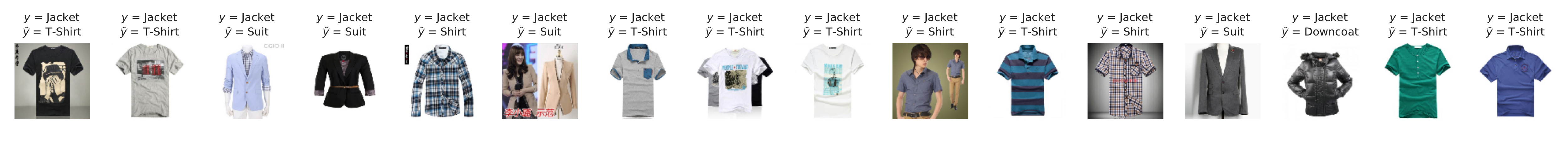}
    \end{subfigure}
    \begin{subfigure}{0.9\textwidth}
    \includegraphics[width=\textwidth]{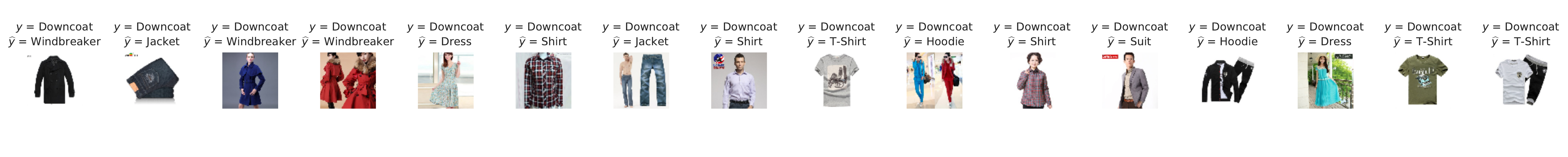}
    \end{subfigure}
    \begin{subfigure}{0.9\textwidth}
    \includegraphics[width=\textwidth]{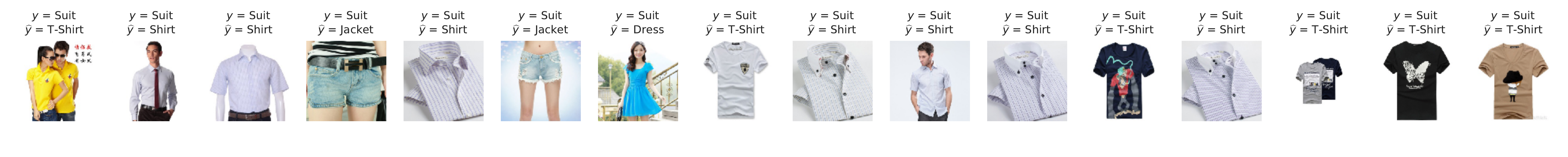}
    \end{subfigure}
    \begin{subfigure}{0.9\textwidth}
    \includegraphics[width=\textwidth]{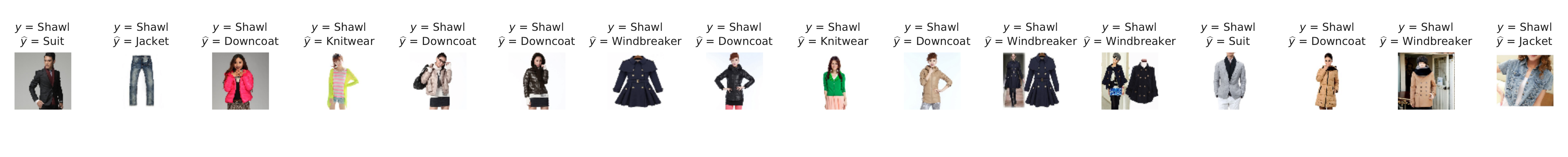}
    \end{subfigure}
    \begin{subfigure}{0.9\textwidth}
    \includegraphics[width=\textwidth]{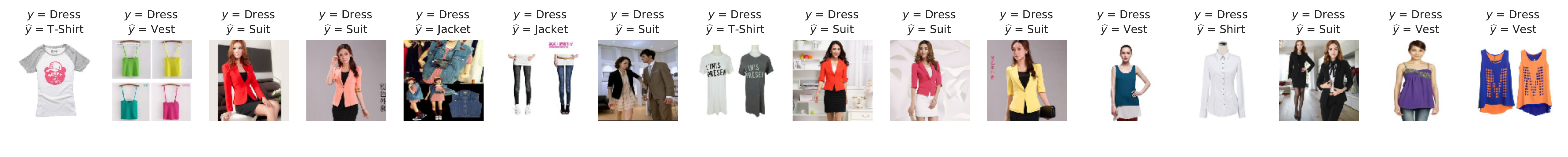}
    \end{subfigure}
    \begin{subfigure}{0.9\textwidth}
    \includegraphics[width=\textwidth]{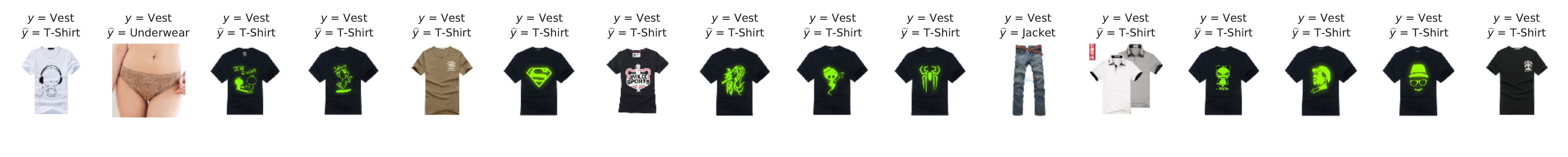}
    \end{subfigure}
    \begin{subfigure}{0.9\textwidth}
    \includegraphics[width=\textwidth]{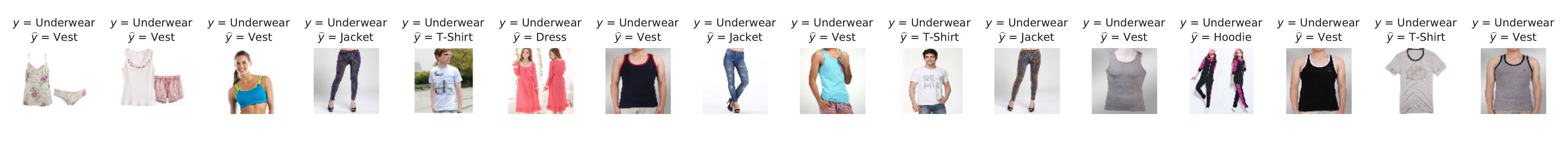}
    \end{subfigure}
    \caption{Most confusing 16 labels per class in the Clothing1M dataset, according to the distance between predicted and cross-entropy gradients. The gradient predictions are done using the best instance of LIMIT.}
    \label{fig:clothgin1m-confusing-more-examples}
\end{figure}

\end{document}